
\documentclass{article}

\usepackage{microtype}
\usepackage{graphicx}
\usepackage{subcaption}
\usepackage{booktabs} 
\usepackage{diagbox} 
\usepackage{hyperref}


\PassOptionsToPackage{numbers, compress}{natbib}
\usepackage{arxiv}

\usepackage{color}
\newcounter{nbdrafts}
\setcounter{nbdrafts}{0}
\makeatletter
\newcommand{\checknbdrafts}{
\ifnum \thenbdrafts > 0
\@latex@warning@no@line{*WARNING* The document contains \thenbdrafts \space draft note(s)}
\fi}

\makeatother

\usepackage{algorithm} 
\usepackage{algorithmic}  
\usepackage[ruled, algo2e]{algorithm2e}
\usepackage{amsmath}
\usepackage{amssymb}
\usepackage{amsfonts}
\usepackage{amsthm}
\usepackage{subcaption}
\usepackage{times}
\usepackage{latexsym}
\usepackage{breqn}
\usepackage{float}
\usepackage{multirow}
\usepackage{adjustbox}
\usepackage{xcolor}

\newtheorem{theorem}{Proposition}
\newcommand{\bigO}[1]{\mathcal{O}\left(#1\right)}
\newcommand{\softmax}[1]{\sfx\left(#1\right)}
\newcommand{\expo}[1]{\exp\left(#1\right)}

\DeclareMathOperator*{\sfx}{softmax}
\newcommand{\R}{\mathbb{R}}

\newcommand{\norm}[1]{\left\|#1\right\|}
\newcommand{\abs}[1]{\left|#1\right|}
\newcommand{\un}[1]{\underline{#1}}
\renewcommand{\cite}{\citep}
\newcommand{\boldparagraph}[1]{\vspace{0.2cm}\noindent{\bf #1:}}
\newcommand\FramedBox[3]{%
  \setlength\fboxsep{3pt}
    \fbox{\parbox[t][#1][c]{#2}{\centering\footnotesize #3}}}


\title{Fast Transformers with Clustered Attention}
\author{
    Apoorv Vyas$^{1 2}$
    \quad Angelos Katharopoulos$^{1 2}$
    \quad Fran\c{c}ois Fleuret$^{2 3}$ \thanks{Work done at Idiap} \\
    $^{1}$Idiap Research Institute, Switzerland \\
    $^{2}$Ecole Polytechnique F\'ed\'erale de Lausanne, Switzerland\\
    $^{3}$University of Geneva, Switzerland \\
    \texttt{firstname.lastname@idiap.ch}
}

\begin{document}

\maketitle
\begin{abstract}
Transformers have been proven a successful model for a variety of tasks in
sequence modeling. However, computing the attention matrix, which is their key
component, has quadratic complexity with respect to the sequence length, thus
making them prohibitively expensive for large sequences. To address this, we
propose \emph{clustered attention}, which instead of computing the attention
for every query, groups queries into clusters and computes attention just for
the centroids. To further improve this approximation, we use the computed
clusters to identify the keys with the highest attention per query and compute
the exact key/query dot products. This results in a model with linear
complexity with respect to the sequence length for a fixed number of clusters.
We evaluate our approach on two automatic speech recognition datasets and show
that our model consistently outperforms vanilla transformers for a given
computational budget.  Finally, we demonstrate that our model can approximate
arbitrarily complex attention distributions with a minimal number of clusters
by approximating a pretrained BERT model on GLUE and SQuAD benchmarks with only
25 clusters and no loss in performance.
\end{abstract}

\section{Introduction}

Sequence modelling is a fundamental task of machine learning, integral in a
variety of applications such as neural machine translation
\cite{bahdanau2014neural}, image captioning \cite{xu2015show}, summarization
\cite{maybury1999advances}, automatic speech recognition \cite{dong2018speech}
and synthesis \cite{oord2016wavenet} etc. Transformers
\cite{vaswani2017attention} have been proven a powerful tool significantly
advancing the state-of-the-art for the majority of the
aforementioned tasks. In particular, transformers employ self-attention that
allows them to handle long sequences without the vanishing-gradient problem
inherent in RNNs \cite{hochreiter2001gradientflow, arjovsky2016unitary}.

Nonetheless, despite their impressive performance, the use of self-attention
comes with computational and memory requirements that scale quadratic to the
sequence length, limiting their applicability to long sequences. The quadratic
complexity becomes apparent if we consider the core mechanism of self-attention,
namely splitting the input sequence into queries and keys and then each query
attending to all keys. To this end, recently, there has been an increasing
interest for developing methods that address this limitation
\cite{dai2019transformer, sukhbaatar2019adaptive, child2019generating,
kitaev2020reformer}.

These methods can be broadly categorized into two distinct lines of work, those
that focus on improving the asymptotic complexity of the self-attention
computation \cite{child2019generating, lee2019set, kitaev2020reformer, roy2020efficient} and those that aim
at developing techniques that make transformers applicable to longer sequences
without addressing the quadratic complexity of self-attention
\cite{dai2019transformer, sukhbaatar2019adaptive}. The former limits the amount
of keys that each query attends to, thus reducing the asymptotic
complexity. The latter increases the length of the sequence that a
transformer can attend to without altering the underlying complexity of the
self-attention mechanism.

In this work, we propose \emph{clustered attention} which is a fast
approximation of self-attention. Clustered attention makes use of similarities
between queries and groups them in order to reduce the computational cost. In
particular, we perform fast clustering using locality-sensitive hashing and
K-Means and only compute the attention once per cluster.  This results in
linear complexity for a fixed number of clusters
(\S~\ref{subsec:grouped_attention}).
In addition, we showcase that we can further improve the quality of our
approximation by separately considering the keys with the highest attention per
cluster (\S~\ref{subsec:improved_attn}). Finally, we provide theoretical bounds
of our approximation quality with respect to the full attention
(\S~\ref{subsubsec:approximation_quality}, \S~\ref{subsec:quality_improved})
and show that our model can be applied for inference of pre-trained
transformers with minimal loss in performance.

We evaluate our model on two automatic speech recognition datasets and showcase
that clustered attention consistently achieves better performance than vanilla
attention when the computational budget is equalized. Moreover, we demonstrate
that our proposed attention can approximate a pretrained BERT model on the
popular GLUE and SQuAD benchmarks with only 25 clusters and without loss in
performance.

\section{Related Work}

In this section, we discuss the most relevant works on scaling transformers to
larger sequences. We start by presenting approaches that aim to speed up the
attention computation in general.  Subsequently, we discuss approaches that
speed up transformers without changing the complexity of the attention layer
and finally, we summarize the most related works on improving the asymptotic
complexity of the attention layer in transformer models.

\subsection{Attention Improvements Before Transformers}

Attention has been an integral component of neural networks for sequence
modelling for several years \cite{bahdanau2014neural, xu2015show,
chan2016listen}. However, its quadratic complexity with respect to the sequence
length hinders its applicability on large sequences.

Among the first attempts to address this was the work of
\citet{britz2017efficient} that propose to aggregate the information of the
input sequence into fewer vectors and perform attention with these fewer
vectors, thus speeding up the attention computation and reducing the memory
requirements. However, the input aggregation is performed using a learned but fixed
matrix that remains constant for all sequences, hence significantly
limiting the expressivity of the model.  Similarly, \citet{chiu2017monotonic}
limit the amount of accessible elements to the attention, by
attending monotonically from the past to the future. Namely, if timestep $i$
attends to position $j$ then timestep $i+1$ cannot attend to any of the earlier
positions.  Note that in order to speed up the attention computation, the above
methods are limiting the number of elements that each layer attends to.
Recently, some of these approaches have also been applied in the context of
transformers \cite{Ma2020Monotonic}.

\subsection{Non-asymptotic Improvements} \label{sec:related-non-asym}

In this section, we summarize techniques that seek to apply transformers to long
sequences without focusing on improving the quadratic complexity of
self-attention. The most important are Adaptive Attention Span Transformers
\cite{sukhbaatar2019adaptive} and Transformer-XL \cite{dai2019transformer}.

\citet{sukhbaatar2019adaptive} propose to limit the self-attention context to
the closest samples (attention span), in terms of relative distance with
respect to the time step, thus reducing both the time and memory requirements of
self-attention computation. This is achieved using a masking function with learnable parameters
that allows the network to increase the attention span if necessary.
Transformer-XL \cite{dai2019transformer}, on the other hand, seeks to increase
the effective sequence length by introducing segment-level recurrent training,
namely splitting the input into segments and attending jointly to the previous
and the current segment. The above, combined with a new relative positional
encoding results in models that attend to more distant positions than the
length of the segment used during training.

Although both approaches have been proven effective, the underlying
limitations of self-attention still remains. Attending to an element
that is $N$ timesteps away requires $\bigO{N^2}$ memory and computation.  In
contrast, our model trades-off a small error in the computation of the full
attention for an improved \emph{linear} asymptotic complexity. This makes
processing long sequences possible.

\subsection{Improvements in Asymptotic Complexity}

\citet{child2019generating} factorize the self-attention mechanism in local and
strided attention. The local attention is computed between the $C$ nearest
positions and the strided attention is computed between positions that are $C$
steps away from each other. When $C$ is set to $\sqrt{N}$ the total asymptotic
complexity becomes $\bigO{N \sqrt{N}}$ both in terms of memory and computation
time. With the aforementioned factorization, two self-attention layers are
required in order for any position to attend to any other position.
In addition, the factorization is fixed and data independent. This makes it
intuitive for certain signals (e.g. images), however in most cases it is
arbitrary. In contrast, our method automatically groups the input queries that
are similar without the need for a manually designed factorization. Moreover,
in our model, information flows always from every position to every other
position.


Set Transformers \cite{lee2019set} compute attention between the input sequence
$X$, of length $N$ and a set of trainable parameters, $I$, called {inducing
points} to get a new sequence $H$, of length $M << N$. The new sequence $H$ is
then used to compute the attention with $X$ to get the output representation.
For a fixed $M$, the asympotic complexity becomes linear with respect to the
sequence length.  Inducing points are expected to encode some global structure
that is task specific. However, this introduces additional model parameters for
each attention layer. In contrast to this, we use clustering to project the
input to a fixed sequence of smaller length without any increase in the number
of parameters. Moreover, we show that not only our method has the same
asymptotic complexity, it can also be used to speed up inference of pretrained
models without additional training.

Recently, \citet{kitaev2020reformer} introduced Reformer. A method that
groups positions based on their similarity using locality-sensitive
hashing (LSH) and only computes the attention within groups. For groups of
fixed size, the asymptotic complexity of Reformer becomes linear with respect
to the sequence length. Note that Reformer constrains the queries and keys of
self-attention to be equal. As a result, it cannot be applied to neural machine
translation, image captioning or memory networks, or generally any application
with heterogenous queries and keys. In addition, as it uses hash collisions to
form groups it can only handle a small number of bits, thus significantly
reducing the quality of the grouping. Instead, our method uses clustering to
group the queries, resulting in significantly better groups compared to
hash collisions.

\section{Scaling Attention with Fast Clustering}

In this section, we formalize the proposed method for approximate softmax
attention. In  \S~\ref{subsec:vanilla_attn}, we first discuss the attention
mechanism in \emph{vanilla transformers} and present its computational
complexity. We then introduce \emph{clustered attention} in
\S~\ref{subsec:grouped_attention} and show that for queries close in the
Euclidean space, the attention difference can be bounded by the distance
between the queries. This property allows us to reduce the computational
complexity by clustering the queries. Subsequently, in
\S~\ref{subsec:improved_attn} we show that we can further improve the
approximation by first extracting the top-$k$ keys with the highest attention
per cluster and then computing the attention on these keys separately for each
query that belongs to the cluster. A graphical illustration of
our method is provided in the supplementary material.

\subsection{Vanilla Attention}\label{subsec:vanilla_attn}

For any sequnce of length $N$, the standard attention mechanism that is used in
transformers is the dot product attention introduced by
\citet{vaswani2017attention}. Following standard notation, we define the
attention matrix $A \in \R^{N \times N}$ as,
\begin{equation}
    A = \softmax{\frac{Q K^T}{\sqrt{D_k}}},
    \label{eq:attention}
\end{equation}
where $Q \in \R^{N \times D_k}$ denotes the \emph{queries} and $K \in \R^{N
\times D_k}$ denotes the \emph{keys}. Note that $\softmax{\cdot}$ is applied
row-wise. Using the attention weights $A$ and the values $V \in \R^{N \times
D_v}$, we compute the new values $\hat{V}$ as follows,
\begin{equation}
    \hat{V} = A V.
    \label{eq:new_values}
\end{equation}
An intuitive understanding of the attention, as described above, is that given
$Q, K, V$ we create new values $\hat{V}$ as the weighted average of the
old ones, where the weights are defined by the attention matrix $A$.
Computing equation \ref{eq:attention} requires $\bigO{N^2 D_k}$ operations and
the weighted average of equation \ref{eq:new_values} requires $\bigO{N^2 D_v}$.
This results in an asymptotic complexity of $\bigO{N^2 D_k + N^2 D_v}$.

\subsection{Clustered Attention} \label{subsec:grouped_attention}

Instead of computing the attention matrix for all queries, we group
them into $C$ clusters and compute the attention only for these clusters. Then,
we use the same attention weights for queries that belong to the same cluster.
As a result, the attention computation now becomes $\bigO{N C D_k}$, where $C
\ll N$. 

More formally, let us define $S \in \{0,1\}^{N \times C}$, a partitioning of
the queries $Q$ into $C$ non-overlapping clusters, such that, $S_{ij} = 1$, if
the $i$-th query $Q_i$ belongs to the $j$-th cluster and $0$ otherwise.  Using
this partitioning, we can now compute the \emph{clustered attention}. First, we
compute the cluster centroids as follows,
\begin{align}
    Q^c_{j} &= \frac{\sum_{i=1}^{N}S_{ij}Q_{i}}{\sum_{i=1}^{N}S_{ij}},
    \label{eq:centroid}
\end{align}
where $Q^c_j$ is the centroid of the $j$-th cluster. Let us denote $Q^c \in \R^{C
\times D_k}$ as the centroid matrix. Now, we can compute the clustered
attention as if $Q^c$ were the queries. Namely, we compute the clustered
attention matrix $A^c \in \R^{C \times N}$
\begin{equation}
    A^c = \softmax{\frac{Q^c K^T}{\sqrt{D_k}}} \label{eq:clust_attn}
\end{equation}
and the new values $\hat{V}^c \in \R^{C \times D_v}$
\begin{equation}
    \hat{V}^c = A^c V. \label{eq:clust_vals}
\end{equation}
Finally, the value of the $i$-th query becomes the value of its closest
centroid, namely,
\begin{equation}
    \hat{V}_i = \sum_{j=1}^C S_{ij} \hat{V}^c_j. \label{eq:broadcast}
\end{equation}

From the above analysis, it is evident that we only need to compute the
attention weights and the weighted average of the values \emph{once per
cluster}. Then, we can broadcast the same value to all queries belonging to the
same cluster. This allows us to reduce the number of dot products from $N$ for
each query to $C$ for each cluster, which results in an asymptotic complexity
of $\bigO{N C D_k} + \bigO{C N D_v}$.

Note that in practice, we use multi-head attention, this means that two queries
belonging to the same cluster can be clustered differently in another attention
head. Moreover, the output of the attention layer involves residual
connections.  This can cause two queries belonging to the same cluster to have
different output representations. The combined effect of residual connections
and multi-head attention allows new clustering patterns to emerge in subsequent
layers.

\subsubsection{Quality of the approximation} \label{subsubsec:approximation_quality}

From the above, we show that grouping queries into clusters can speed-up the
self-attention computation. However, in the previous analysis, we do not
consider the effects of clustering on the attention weights $A$.  To address
this, we derive a bound for the approximation error. In particular, we show
that the difference in attention can be bounded as a function of the Euclidean
distance between the queries.

\begin{theorem} \label{thm:clustered_quality}
    Given two queries $Q_i$ and $Q_j$ such that $\norm{Q_i - Q_j}_2
    \leq\epsilon$,
    \begin{equation}
        \begin{aligned}
            \norm{\softmax{Q_i K^T} - \softmax{Q_j K^T}}_2 \leq
                 \epsilon \norm{K}_2 \,,
        \end{aligned}
    \end{equation}
    where $\norm{K}_2$ denotes the spectral norm of $K$.
\end{theorem}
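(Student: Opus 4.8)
The plan is to regard $\softmax{\cdot}$ as a map $\R^N \to \R^N$, bound its Lipschitz constant in the Euclidean norm, and then exploit the linearity of the map $Q_i \mapsto Q_i K^T$. Writing $a = Q_i K^T$ and $b = Q_j K^T$ as vectors in $\R^N$, the quantity to control is $\norm{\softmax{a} - \softmax{b}}_2$. Since $Q_i \mapsto Q_i K^T$ is linear, the definition of the spectral norm together with $\norm{K^T}_2 = \norm{K}_2$ gives immediately $\norm{a - b}_2 = \norm{(Q_i - Q_j) K^T}_2 \leq \norm{K}_2 \norm{Q_i - Q_j}_2 \leq \epsilon \norm{K}_2$. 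The whole statement therefore reduces to showing that $\softmax{\cdot}$ is $1$-Lipschitz with respect to $\norm{\cdot}_2$, after which the two estimates compose to give the claim.

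The key step is bounding the operator norm of the Jacobian of $\softmax{\cdot}$. A direct differentiation shows that at a point $x$ with $p = \softmax{x}$ the Jacobian equals $J(x) = \mathrm{diag}(p) - p\, p^T$. For a unit vector $v$, the quadratic form $v^T J(x) v = \sum_i p_i v_i^2 - \left(\sum_i p_i v_i\right)^2$ is exactly the variance of the coordinate values $v_i$ under the probability distribution $p$. As a variance it is nonnegative, and it is bounded above by $\sum_i p_i v_i^2 \leq \max_i v_i^2 \leq \norm{v}_2^2 = 1$. Hence $J(x)$ is symmetric with all eigenvalues in $[0,1]$, so $\norm{J(x)}_2 \leq 1$ uniformly in $x$.

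Finally I would pass from this infinitesimal bound to a global one by integrating along the segment joining $b$ to $a$, writing $\softmax{a} - \softmax{b} = \int_0^1 J\bigl(b + t(a-b)\bigr)(a-b)\,dt$ and taking norms to get $\norm{\softmax{a} - \softmax{b}}_2 \leq \sup_t \norm{J(b + t(a-b))}_2 \, \norm{a-b}_2 \leq \norm{a-b}_2$. Chaining this with the linear bound $\norm{a-b}_2 \leq \epsilon \norm{K}_2$ established above yields the stated inequality. I expect the main obstacle to be the Jacobian norm bound: the clean constant $1$ follows only once one recognizes the quadratic form as a variance, whereas a crude triangle-inequality estimate of $\mathrm{diag}(p) - p\,p^T$ would produce a larger constant and destroy the tight $\epsilon \norm{K}_2$ conclusion.
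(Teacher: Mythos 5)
Your proof is correct and follows essentially the same route as the paper: both reduce the claim to the $1$-Lipschitz continuity of $\softmax{\cdot}$ in the Euclidean norm composed with the spectral-norm bound $\norm{(Q_i - Q_j)K^T}_2 \leq \epsilon\norm{K}_2$. The only difference is that the paper simply cites the Lipschitz property from \citet{gao2017properties}, whereas you prove it from scratch via the Jacobian $\mathrm{diag}(p) - pp^T$, correctly identifying the quadratic form as a variance to obtain the eigenvalue bound in $[0,1]$ and then integrating along the segment --- a valid, self-contained substitute for the citation.
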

\begin{proof}
Given that $\softmax{\cdot}$ has Lipschitz constant less than 1
\cite{gao2017properties},
\begin{equation}
    \begin{aligned}
     & \norm{\softmax{Q_i K^T} - \softmax{Q_j K^T}}_2 \\
     & \quad \leq \norm{Q_i K^T - Q_j K^T}_2 \\
     & \quad \leq \epsilon \norm{K}_2
     \end{aligned}
\end{equation}
\end{proof}

Proposition \ref{thm:clustered_quality} shows that queries that are close in
Euclidean space have similar attention distributions. As a result, the error in
the attention approximation for the $i$-th query assigned to the $j$-th
cluster can be bounded by its distance from the cluster centroid $Q^c_{j}$.

\subsubsection{Grouping the Queries}
\label{subsubsec:query_group}

From the discussion, we have shown that given a representative set of queries,
we can approximate the attention with fewer computations. Thus, now the problem
becomes finding this representative set of queries. K-Means clustering
minimizes the sum of squared distances between the cluster members, which would
be optimal given our analysis from \S~\ref{subsubsec:approximation_quality}.
However, for a sequence of length $N$ one iteration of Lloyd's algorithm for
the K-Means optimization problem has an asymptotic complexity $\bigO{N
C D_k}$. To speed up the distance computations, we propose to use
\emph{Locality-Sensitive Hashing} (LSH) on the queries and then K-Means in
Hamming space. In particular, we use the sign of random projections
\cite{shrivastava2014asymmetric} to hash the queries followed by K-Means
clustering with hamming distance as the metric.  This results in an asymptotic
complexity of $\bigO{N C L + C B L + N D_k B}$, where $L$ is the number of
Lloyd iterations and $B$ is the number of bits used for hashing.

%
\subsection{Improving clustered attention} \label{subsec:improved_attn}

In the previous section, we show that clustered attention provides a fast
approximation for softmax attention. In this section, we discuss how this
approximation can be further improved by considering separately the keys with
the highest attention. To intuitively understand the importance of the above,
it suffices to consider a scenario where a key with low attention for some
query gets a high attention as approximated with the cluster centroid. This can
happen when the number of clusters are too low or due to the convergence
failure of K-Means. For the clustered attention, described in
\S~\ref{subsec:grouped_attention}, this introduces significant error in the
computed value. The variation discussed below addresses such limitations.

After having computed the clustered attention $A^c$ from equation
\ref{eq:clust_attn}, we find the $k$ keys with the highest attention for each
cluster. The main idea then is to improve the attention approximation on these
top-$k$ keys for each query that belongs to the cluster. To do so, we first
compute the dot product attention as defined in equation \ref{eq:attention} on
these top-$k$ keys for all queries belonging to this cluster. For any query,
the computed attention on these top-$k$ keys will sum up to one. This means
that it cannot be directly used to substitute the
clustered-attention on these keys. To address this, before substition, we scale
the computed attention by the total probability mass assigned by the clustered
attention to these top-$k$ keys.

More formally, we start by introducing $T \in \{0, 1\}^{C \times N}$, where
$T_{ji} = 1$ if the $i$-th key is among the top-$k$ keys for the $j$-th cluster
and 0 otherwise. We can then compute the probability mass, let it be
$\hat{m}_j$, of the top-$k$ keys for the $j$-th cluster, as follows
\begin{align}
    \hat{m}_j = \sum_{i=1}^{N}T_{ji}A^c_{ji}.
\end{align}
Now we formulate an improved attention matrix approximation $A^t \in \R^{N
\times N}$ as follows
\begin{equation}
    A^t_{il} = \begin{cases}
                   \frac{\hat{m}_j \expo{Q_i K_l^T}}
                                            {\sum_{r=1}^N T_{jr} \expo{Q_i K_r^T}}
                                            & \text{if } T_{jl} = 1 \\
                   A^c_{jl} & \text{otherwise}
               \end{cases}. \label{eq:improved_attention}
\end{equation}
Note that in the above, $i$ denotes the $i$-th query belonging to the $j$-th
cluster and $\sqrt{D_k}$ is ommited for clarity.
In particular, equation \ref{eq:improved_attention} selects the clustered attention of
equation \ref{eq:clust_attn} for keys that are not among the top-$k$ keys for a
given cluster. For the rest, it redistributes the mass $\hat{m}_j$ according to
the dot product attention of the queries with the top-$k$ keys. The
corresponding new values, $\hat{V} \in \R^{N \times D_{v}}$, are a simple
matrix product of $A^t$ with the values,
\begin{align}
   \hat{V} = A^t V. \label{eq:improved_values_slow}
\end{align}
Equation \ref{eq:improved_values_slow} can be decomposed into
clustered attention computation and two sparse dot products, one for every
query with the top-$k$ keys and one for the top-$k$ attention weights with the
corresponding values. This adds $\bigO{N k \max\left(D_k, D_v\right)}$ to
the asymptotic complexity of the attention approximation of equation
\ref{eq:clust_attn}.


\subsubsection{Quality of the approximation}\label{subsec:quality_improved}

In the following, we provide proof that improved clustered attention (eq.
\ref{eq:improved_attention}) is a direct improvement over the clustered
attention (eq. \ref{eq:clust_attn}), in terms of the $L_1$ distance from the
attention matrix $A$.

\begin{theorem} \label{thm:improved_attention}
For the $i$-th query belonging to the $j$-th cluster, the improved clustered
attention $A^t_i$ and clustered attention $A^c_j$ relate to the full attention
$A_i$ as follows,
\begin{equation}
    \norm{A^t_i - A_i}_1 \leq \norm{A^c_j - A_i}_1
    \label{eq:improvement_proof}
\end{equation}
\end{theorem}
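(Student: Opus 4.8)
The plan is to prove the inequality by splitting the $L_1$ norm $\norm{A^t_i - A_i}_1$ over the two groups of keys: the top-$k$ keys (where $T_{jl} = 1$) and the remaining keys (where $T_{jl} = 0$). On the non-top-$k$ keys, both $A^t_i$ and $A^c_j$ take the same value $A^c_{jl}$, so the contribution to $\norm{A^t_i - A_i}_1$ from these coordinates is identical to the contribution to $\norm{A^c_j - A_i}_1$. The entire difference between the two sides of \eqref{eq:improvement_proof} therefore lives on the top-$k$ keys, and the goal reduces to showing that the $L_1$ error restricted to the top-$k$ coordinates is no larger for $A^t_i$ than for $A^c_j$.

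First I would make the key observation about total mass on the top-$k$ set. By construction, $A^t_{il}$ for $l$ in the top-$k$ set is $\hat{m}_j \expo{Q_i K_l^T} / \sum_{r} T_{jr}\expo{Q_i K_r^T}$, so summing over the top-$k$ keys gives exactly $\hat{m}_j$. But $\hat{m}_j = \sum_i T_{ji} A^c_{ji}$ is precisely the total clustered-attention mass on the top-$k$ set. Hence $A^t_i$ and $A^c_j$ assign the \emph{same} total probability $\hat{m}_j$ to the top-$k$ keys. The crucial point is that, within the top-$k$ set, $A^t_i$ uses the \emph{true} softmax weights of query $i$ (merely rescaled to sum to $\hat{m}_j$), whereas $A^c_j$ uses the centroid's weights. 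Since the true attention $A_i$ also distributes some mass across these keys, I expect to compare how well each approximation matches $A_i$ restricted to this set.

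The technical core is a lemma of the following flavor: if $p$ and $q$ are two nonnegative vectors on the top-$k$ coordinates with the \emph{same} total sum $\hat{m}_j$, and $p$ is proportional to the restriction of the true (unnormalized) softmax of $Q_i$ to those coordinates, then $p$ is the closest such vector (in $L_1$) to the true attention $A_i$ among all vectors with that fixed total mass. Concretely, writing $a_l = A_{il}$ for the true attention and letting $m = \sum_{l\in\text{top-}k} a_l$ be the true mass, one shows that the rescaled-true distribution $A^t_i$ minimizes $\sum_{l}\abs{x_l - a_l}$ subject to $\sum_l x_l = \hat{m}_j$ and $x_l \geq 0$, because $A^t_i$ is itself proportional to $a_l$ on this set (both being the softmax of $Q_i$ up to a common normalizer), so it is the uniform rescaling of $a$ to mass $\hat{m}_j$. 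Any other vector of the same total mass—in particular $A^c_j$ restricted to the top-$k$ keys—can only be farther from $a$ in $L_1$.

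The main obstacle I anticipate is making the optimality argument fully rigorous: I must verify that rescaling the true restricted attention $a_l$ to a prescribed total mass $\hat{m}_j$ genuinely minimizes the $L_1$ distance to $a$ over all fixed-mass nonnegative vectors on the top-$k$ set. The clean way is to note that since all of $A^t_i$, $A^c_j$, and the fixed target mass structure share the same total mass $\hat{m}_j$ on the top-$k$ set, and since $A^t_i$ is a nonnegative scalar multiple of $a$ there, a convexity or rearrangement argument shows $\sum_l \abs{A^t_{il} - a_l} \leq \sum_l \abs{A^c_{jl} - a_l}$ whenever the two candidates agree in total mass. Once this restricted inequality is established, adding back the identical non-top-$k$ contributions to both sides yields \eqref{eq:improvement_proof} immediately. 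I would therefore organize the proof as: (i) decompose the $L_1$ norm, (ii) cancel the non-top-$k$ terms, (iii) establish equal total mass $\hat{m}_j$ on the top-$k$ set for both approximations, and (iv) invoke the fixed-mass $L_1$-optimality of the proportionally-rescaled true attention.
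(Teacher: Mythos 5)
Your proposal is correct and takes essentially the same route as the paper's proof: the same decomposition of the $L_1$ norm over top-$k$ versus remaining keys, the same cancellation of the non-top-$k$ terms (where $A^t_i$ and $A^c_j$ coincide by construction), and a final step that is the paper's argument in variational clothing --- the paper uses your proportionality observation in the form $A^t_{il} = \frac{\hat{m}_j}{m_i} A_{il}$ to compute the top-$k$ error of $A^t_i$ exactly as $\abs{m_i - \hat{m}_j}$, and then bounds $\abs{m_i - \hat{m}_j} = \abs{\sum_{l:\,T_{jl}=1} \left(A_{il} - A^c_{jl}\right)} \leq \sum_{l:\,T_{jl}=1} \abs{A_{il} - A^c_{jl}}$ by the triangle inequality. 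The one step you leave vague (the ``convexity or rearrangement argument'' for fixed-mass $L_1$-optimality of the proportional rescaling) is made rigorous by exactly this pair of facts: any nonnegative vector of total mass $\hat{m}_j$ on the top-$k$ set lies at $L_1$ distance at least $\abs{\hat{m}_j - m_i}$ from the restricted true attention by the triangle inequality, and the proportionally rescaled vector $A^t_i$ attains this lower bound, so in particular it beats $A^c_j$, which has the same total mass $\hat{m}_j$ there.
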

Due to lack of space, the proof of the above proposition is presented in the
supplementary material. From equation \ref{eq:improvement_proof} it becomes
evident that improved clustered attention will always approximate the full
attention better compared to clustered attention.

\section{Experiments}

In this section, we analyze experimentally the performance of our proposed
method. Initially, we show that our model outperforms our baselines
for a given computational budget on a real-world sequence to sequence task,
namely automatic speech recognition on two datasets, the Wall Street Journal
dataset (\S~\ref{subsec:asr_wsj}) and the Switchboard dataset
(\S~\ref{subsec:asr_swbd}). Subsequently, in \S~\ref{sec:approx}, we
demonstrate that our model can approximate a pretrained BERT model
\cite{liu2019roberta} on the GLUE \cite{wangsmhlb19} and SQuAD
\cite{rajpurkar2018know} benchmarks with minimal loss in performance even when
the number of clusters is less than one tenth of the sequence length. Due to
lack of space, we also provide, in the supplementary material, a thorough
benchmark that showcases the linear complexity of \emph{clustered attention}
and an ablation study regarding how the number of clusters scales with respect
to the sequence length.

We compare our model with the vanilla transformers \cite{vaswani2017attention},
which we refer to as \textbf{full} and the Reformer \cite{kitaev2020reformer},
which we refer to as \textbf{lsh-X}, where $X$ denotes the rounds of hashing.
We refer to \emph{clustered attention}, introduced in
\S~\ref{subsec:grouped_attention}, as \textbf{clustered-X} and to
\emph{improved clustered attention}, introduced in
\S~\ref{subsec:improved_attn}, as \textbf{i-clustered-X}, where $X$ denotes
the number of clusters. Unless mentioned otherwise we use $k=32$ for the
top-$k$ keys with improved clustered.

All experiments are conducted using NVidia GTX 1080 Ti with $11$GB of memory
and all models are implemented in PyTorch \cite{paszke2019pytorch}.
For Reformer we use a PyTorch port of the published code. Note that we do not
use reversible layers since it is a technique that could be applied to all
methods. Our PyTorch code can be found at
\url{https://clustered-transformers.github.io}.

\begin{figure*}
    \begin{subfigure}{\linewidth}
        \centering
        \includegraphics[width=0.6\textwidth]{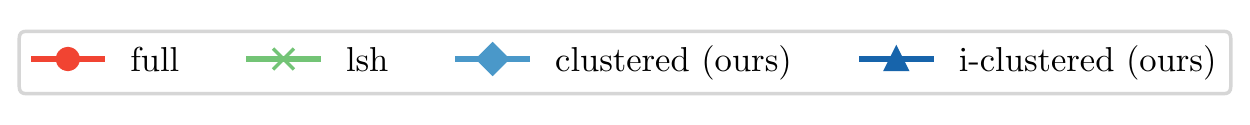}
    \end{subfigure}
    \begin{subfigure}{0.49\linewidth}
        \includegraphics[width=\textwidth]{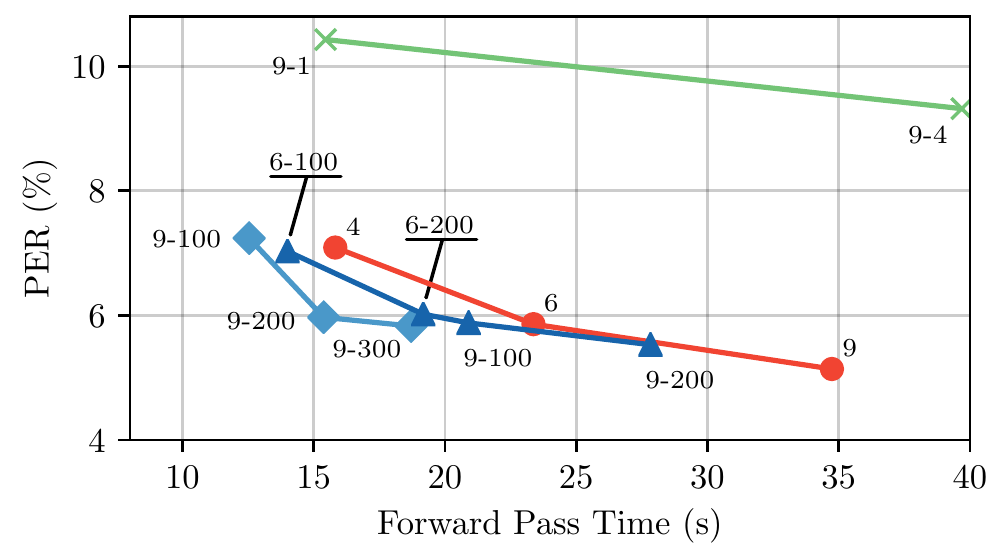}
        \caption{Wall Street Journal} \label{fig:asr_wsj}
    \end{subfigure}
    \begin{subfigure}{0.49\linewidth}
        \includegraphics[width=\textwidth]{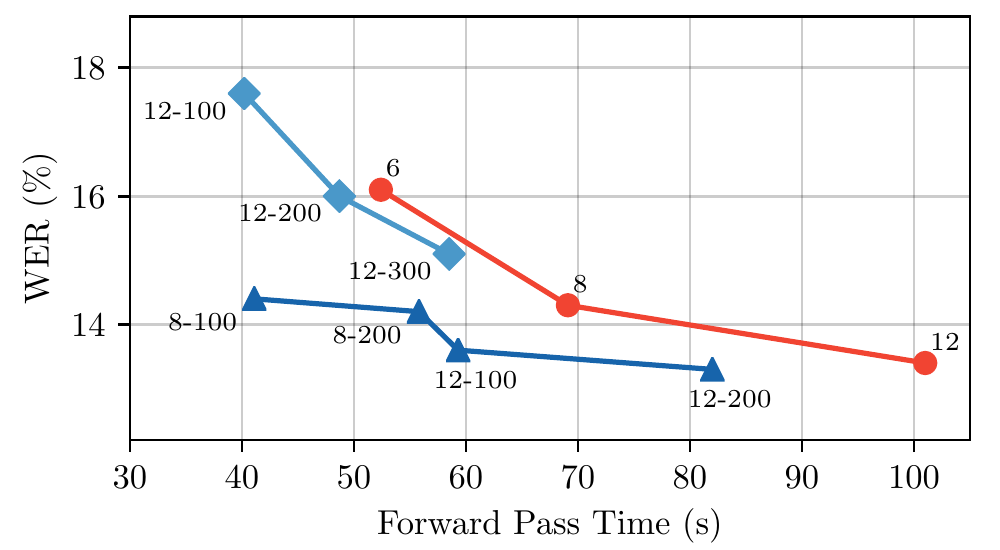}
        \caption{Switchboard} \label{fig:asr_swbd}
    \end{subfigure}
    \caption{We compare the achieved performance of various transformer models
             under an equalized computational budget. The numbers near the
             datapoints denote the number of layers and number of clusters or
             hashing rounds where applicable. i-clustered is consistently
             better than all baselines for a given computational budget both in
             WSJ and Switchboard datasets. The details can be found in
             \S~\ref{subsec:asr_wsj} and \S~\ref{subsec:asr_swbd}
             respectively.}
    \label{fig:asr}
\end{figure*}

\subsection{Evaluation on Wall Street Journal (WSJ)} \label{subsec:asr_wsj}

In our first experiment, we employ the Wall-Street Journal dataset
\cite{douglas1992wsj}. The input to all transformers is $40$-dimensional
filter-bank features with fixed positional embeddings. We train
using Connectionist Temporal Classification (CTC) \cite{graves2006ctc} loss
with phonemes as ground-truth labels. The approximate average and maximum
sequence lengths for the training inputs are $780$ and $2500$ respectively.

\boldparagraph{Speed Accuracy Trade-off}
We start by comparing the performance of our proposed model with various
transformer variants under an equalized computational budget. To this end, we
train \emph{full} with $4$, $6$ and $9$ layers to get a range of the required
computation time and achieved \emph{phone error rate} (PER). Similarly, we
train \emph{i-clustered} with $6$ and $9$ layers. Both models are trained
with $100$ and $200$ clusters.
We also train \emph{clustered} with $9$ layers, and $100$, $200$ and $300$
clusters. Finally, we train Reformer with $9$ layers, and $1$ and $4$ hashing
rounds. We refer the reader to our supplementary for the specifics of all
transformer architectures as well as their training details.
In figure \ref{fig:asr_wsj}, we plot the achieved PER on the validation set with
respect to the required time to perform a full forward pass. Our
\emph{i-clustered} achieves lower PER than all other baselines for a given
computational budget.

\boldparagraph{Approximation Quality}
To assess the approximation capabilities of our method, we train different
transformer variants on the aforementioned task and evaluate them using other
self-attention implementations during inference. As the Reformer requires the
queries to be identical to the keys to evaluate its approximation ability we
also train a full attention model with shared queries and keys, which we refer
to as \textbf{shared-full}. Note that both clustered attention and improved
clustered attention can be used for approximating shared-full, simply by
setting keys to be equal to queries.  Table~\ref{tab:asr_approximation}
summarizes the results.  We observe that improved clustered attention (7-8
rows) achieves the lowest phone error rate in every comparison. This implies
that it is the best choice for approximating pre-trained models. In addition,
we also note that as the number of clusters increases, the approximation
improves as well.
\bgroup
\renewcommand{\arraystretch}{1.1}
\begin{table*}[h]
    \begin{center}
    \scalebox{0.9}{
    \begin{tabular}{ll||cccccc}
        \multicolumn{2}{c}{} & \multicolumn{6}{c}{\textbf{Train with}} \\
        & \multicolumn{1}{c}{} & full & shared-full & lsh-1 & lsh-4
            & clustered-100 & i-clustered-100 \\
        \toprule
        \multirow{9}{*}{\rotatebox[origin=c]{90}{\textbf{Evaluate with}}}
        & full               &  \un{5.14} &  -        &  -         &  -         &  7.10      &  5.56      \\
        & shared-full        &  -         & \un{6.57} & 25.16      & 41.61      &   -        &   -        \\
        & lsh-1              &  -         & 71.40     & \un{10.43} & 13.76      &   -        &   -        \\
        & lsh-4              &  -         & 64.29     &  9.35      & \un{9.33}  &   -        &   -        \\
        & clustered-100      & 44.88      & 40.86     & 68.06      & 66.43      &  \un{7.06} & 18.83      \\
        & clustered-200      & 21.76      & 25.86     & 57.75      & 57.24      &  6.34      &  8.95      \\
        & i-clustered-100 &  9.29      & 13.22     & 41.65      & 48.20      &  8.80      &  \un{5.95} \\
        & i-clustered-200 &  6.38      &  8.43     & 30.09      & 42.43      &  7.71      &  5.60      \\
        \cmidrule{2-8}
        & oracle-top      & 17.16      & 77.18     & 43.35      & 59.38      & 24.32      &  6.96      \\
        \bottomrule
    \end{tabular}
    }
    \end{center}
    \caption{
        We report validation phone error rate (PER) on the WSJ dataset
        (\S~\ref{subsec:asr_wsj}).  We train with one model and evaluate with
        another to assess the approximation abilities of different models.
        \un{Underline} denotes training and testing with the same model.
        Improved cluster (rows 7-8) approximates the full and the shared-full
        significantly better than all the other fast attention methods.
    }
    \label{tab:asr_approximation}
\end{table*}
\egroup
Furthermore, to show that the top keys alone are not sufficient for
approximating \emph{full}, we also compare with an attention variant, that for each
query only keeps the $32$ keys with the highest attention. We refer to the
latter as \textbf{oracle-top}. We observe that oracle-top achieves
significantly larger phone error rate than improved clustered in all cases.
This implies that improved clustered attention also captures the significant
long tail of the attention distribution.

\boldparagraph{Convergence Behaviour}
In Table \ref{tab:asr_per}, we report the required time per epoch as well as
the total training time for all transformer variants with 9 layers. For
completeness, we also provide the corresponding phone error rates on the test
set. We observe that clustered attention is more than two times faster than
full (per epoch) and achieves significantly lower PER than both Reformer
variants (lsh-$1$ and lsh-$4$). Improved clustered is the only method that is
not only faster per epoch but also in total wall-clock time required to
converge.
\begin{table}[h!]
    \centering
    \begin{tabular}{r|ccccc}
        \toprule
        \multicolumn{1}{c}{\,} & full & lsh-1 & lsh-4 & clustered-100 & i-clustered-100 \\
        \midrule
        PER (\%) & 5.03 & 9.43 & 8.59 & 7.50 & 5.61 \\
        Time/Epoch (s) & 2514 & 1004 & 2320 & 803 & 1325 \\
        Convergence Time (h) & 87.99 & 189.64 & 210.09 & 102.15 & 72.14 \\
        \bottomrule
    \end{tabular}
    \vspace{0.5em}
    \caption{We report the test PER, the time per training epoch (in seconds)
             and the wall-clock time required for the convergence of each model
             (in hours).}
    \label{tab:asr_per}
\end{table}

\subsection{Evaluation on Switchboard} \label{subsec:asr_swbd}

We also evaluate our model on the Switchboard dataset
\cite{godfrey1992switchboard}, which is a 
collection of $2,400$ telephone conversations on common topics among $543$
strangers. All transformers are trained with lattice-free MMI loss
\cite{povey2016purely} and as inputs we use $80$-dimensional filter-bank
features with fixed positional embeddings. The average input sequence length is
roughly $534$ and the maximum sequence length is approximately $3850$.
Details regarding the transformer architectures as well as their training
details are provided in the supplementary.

\boldparagraph{Speed Accuracy Trade-off}
Similar to \S~\ref{subsec:asr_wsj}, we compare the performance of various
transformer models given a specific computational budget.
To this end, we train
\emph{full} with $6, 8$ and $12$ layers. Similarly, we train \emph{i-clustered}
with $8$ and $12$ layers; both with $100$ and $200$
clusters. Finally, we also train \emph{clustered} with $12$ layers, and $100,
200$ and $300$ clusters. In figure \ref{fig:asr_swbd}, we plot the achieved
word error rate (WER) in the validation set of Switchboard with respect to the
required time to perform a full forward pass. Our \emph{i-clustered} is
consistently better than \emph{full} for a given computational budget. In
particular, for a budget of approximately $50$ seconds, improved clustered
achieves more than $2$ percentage points lower WER. Furthermore, we note that
it is consistently better than clustered attention for all computational
budgets.

\boldparagraph{Convergence Behaviour}
Table \ref{tab:asr_wer} summarizes the computational cost of training the
transformer models with 12 layers in the Switchboard dataset as well as the WER
in the test set. We observe that due to the larger sequences in this dataset
both clustered and i-clustered are faster to train per epoch and with respect
to total required wall-clock time.
\begin{table}[h!]
    \centering
    \begin{tabular}{r|ccc}
        \toprule
        \multicolumn{1}{c}{\,} & full & clustered-100 & i-clustered-100 \\
        \midrule
        WER (\%) & 15.0 & 18.5 & 15.5 \\
        Time/Epoch (h) & 3.84 & 1.91 &  2.57 \\
        Convergence Time (h) & 228.05 & 132.13 & 127.44 \\
        \bottomrule
    \end{tabular}\vspace{0.5em}
    \caption{We report the test set WER, the time per training epoch (in hours)
             and the wall-clock time required for the convergence of each model
             (in hours).}
    \label{tab:asr_wer}
\end{table}

\subsection{RoBERTa Approximation} \label{sec:approx}

To highlight the ability of our model to approximate arbitrarily complicated
attention distributions, we evaluate our proposed method on the approximation
of a fine-tuned RoBERTa model \cite{liu2019roberta} on the GLUE
\cite{wangsmhlb19} and SQuAD \cite{rajpurkar2018know} benchmarks. In
particular, we evaluate on $10$ different tasks, among which there are tasks
such as question answering (SQuAD) and textual entailment (RTE), which exhibit
arbitrary and sparse attention patterns. We refer the reader to
\citet{wangsmhlb19, rajpurkar2018know} for a detailed analysis of all tasks.

For the GLUE tasks, the maximum sequence length is 128 while for SQuAD, it
is 384. For each task, we use $25$ clusters for
approximation which is less than $20\%$ and $10\%$ of the input sequence length
for GLUE and SQuAD tasks respectively. In Table \ref{tab:bert_approx}, we
summarize the performance per task. We observe that improved clustered performs
as well as the full transformer in all tasks but SQuAD, in which it is only
marginally worse. Moreover, we note that clustered performs significantly worse
in tasks that require more complicated attention patterns such as SQuAD and RTE.
For inference time, \emph{full} was faster than the \emph{clustered} attention
variants due to short sequence lengths.

\begin{table*}[h!]
    \resizebox{\textwidth}{!}{
    \begin{tabular}{rcccccccccc}
        & CoLA & MNLI & MRPC & QNLI & QQP & RTE & SST-2 & STS-B & WNLI & SQuAD \\
        \toprule
        full           & 0.601 & 0.880 & 0.868 & 0.929 & 0.915 & 0.682 & 0.947 & 0.900 &
            0.437 & 0.904 \\
        clustered-25   & 0.598 & 0.794 & 0.436 & 0.746 & 0.894 & 0.498 & 0.944 & 0.789 &
            0.437 & 0.006 \\
        i-clustered-25 & 0.601 & 0.880 & 0.873 & 0.930 & 0.915 & 0.704 & 0.947 & 0.900 &
            0.437 & 0.876\\
    \end{tabular}}
    \caption{We report the performance on GLUE and SQuAD benchmarks. Following
             common practice, we report accuracy for all tasks except STS-B and
             SQuAD, where we report Pearson correlation and F1-score
             respectively. For all metrics higher is better.}
    \label{tab:bert_approx}
\end{table*}

\section{Conclusions}

We have presented \emph{clustered attention} a method that approximates vanilla
transformers with significantly lower computational requirements. In
particular, we have shown that our model can be up to $2 \times$ faster during
training and inference with minimal loss in performance. In contrast to recent
fast variations of transformers, we have also shown that our method can
efficiently approximate pre-trained models with full attention while retaining
the linear asymptotic complexity.

The proposed method opens several research directions towards applying
transformers on long sequence tasks such as music generation, scene flow
estimation etc. We consider masked language modeling for long texts to be of
particular importance, as it will allow finetuning for downstream tasks
that need a context longer than the commonly used 512 tokens.

\section*{Broader Impact}

This work contributes towards the wider adoption of transformers by reducing
their computational requirements; thus enabling their use on embedded or
otherwise resource constrained devices. In addition, we have shown that for
long sequences \emph{clustered attention} can result to almost $50\%$ reduction
in GPU training time which translates to equal reduction in CO2 emmisions and
energy consumption.

\section*{Acknowledgements}
Apoorv Vyas was supported by the Swiss National Science Foundation under grant
number FNS-30213 "SHISSM". Angelos Katharopoulos was supported by the Swiss
National Science Foundation under grant numbers FNS-30209 "ISUL" and FNS-30224
"CORTI".

\bibliographystyle{icml2020}
\bibliography{references}

\appendix
\onecolumn
\standalonetitle{Supplementary Material for \\
                 Fast Transformers with Clustered Attention}

\section{Scaling Attention with Fast Clustering}
In this section we present graphical illustrations for the
proposed \emph{clustered} and \emph{i-clustered}
attention models in \S~\ref{subsec:supp_clustered_attn}
and \S~\ref{subsec:supp_improved_attn} respectively.

\subsection{Clustered attention} \label{subsec:supp_clustered_attn}
In figure \ref{fig:supp_clustered_fc}, we present the steps involved in
\emph{clustered} attention computation for an example sequence with $8$ queries
and the number of clusters set to $3$. We first cluster the queries $Q$
using the K-means clustering to output $S$ which indicates the membership of queries to
different clusters. We use different colors to represent different clusters.
After clustering, the centroids $Q^c$ are used to compute the attention weights
$A^c$ and the new values $V^c$ for the centroids. Finally, the values are 
broadcasted to get the new values $\hat{V}$ corresponding to each query.

\begin{figure}[h]
    \includegraphics[width=\linewidth]{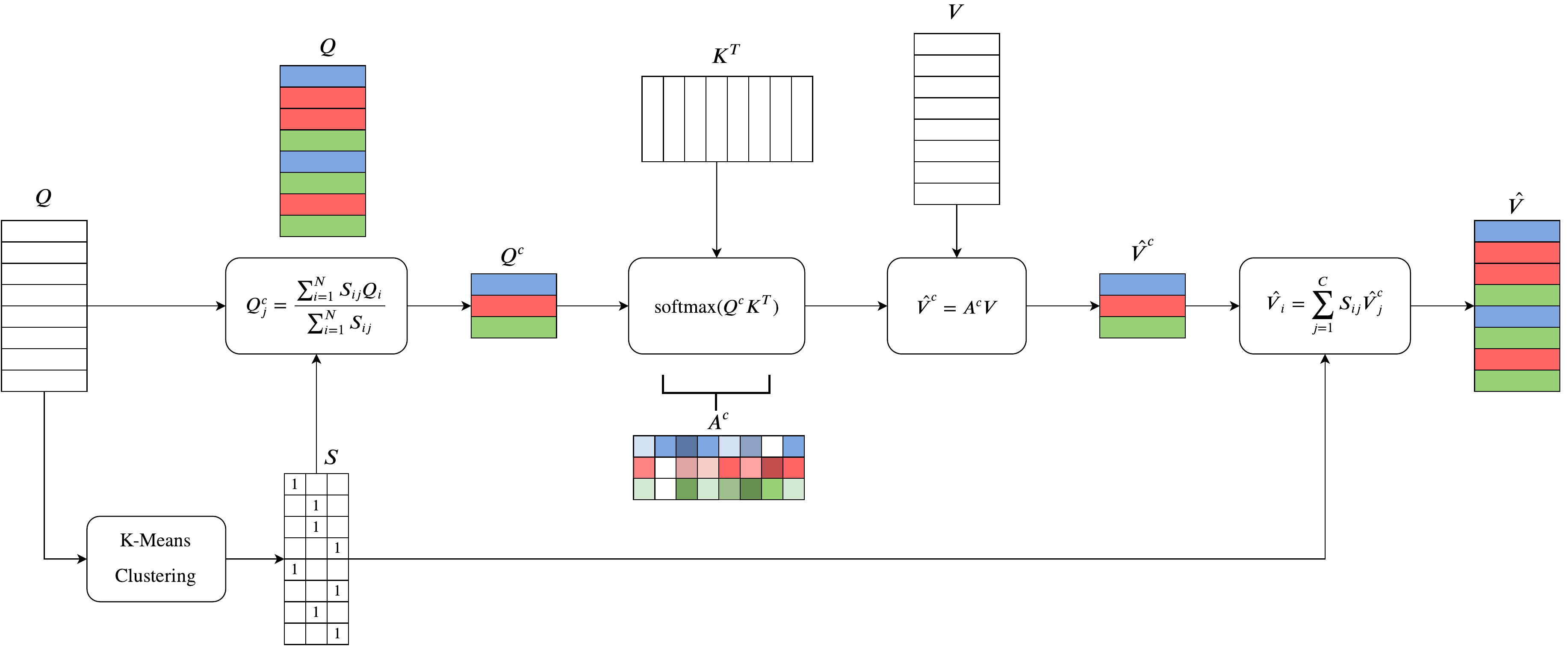}
    \caption{
        Flow-chart demonstrating the compuation for \emph{clustered} attention.
        We use different colors to represent the query groups and the computed
        centroids. The same colors are then used to show the attention weights
        $A^c$, new values for the centroids $\hat{V}^c$, and the resulting
        values $\hat{V}$ after broadcasting. For more details refer to
        \S~\ref{subsec:supp_clustered_attn} or \S~{3.2} in the main paper.
    }
    \label{fig:supp_clustered_fc}
\end{figure}

\subsection{Improved clustered attention} \label{subsec:supp_improved_attn}
In this section, we first describe how we can efficiently compute the 
\emph{i-clustered} attention using sparse dot products with the 
top-$k$ keys and values. We then present the flow chart demonstrating the
same.

As discussed in the \S~{3.3} of the main paper, the improved attention matrix
approximation $A^t_i$ for the query, $Q_i$  belonging to the cluster $j$ is
computed as follows:
\begin{equation}
    A^t_{il} = \begin{cases}
                   \frac{\hat{m}_j \expo{Q_i K_l^T}}
                             {\sum_{r=1}^N T_{jr} \expo{Q_i K_r^T}}
                             & \text{if } T_{jl} = 1 \\
                   A^c_{il} & \text{otherwise}
               \end{cases}, \label{eq:supp_improved_attention}
\end{equation}
where, $T \in \{0, 1\}^{C \times N}$, stores the top-$k$ keys for each cluster.
$T_{ji} = 1$ if the $i$-th key is among the top-$k$ keys for the $j$-th cluster
and 0 otherwise.

As described in the main paper, $\hat{m}_j$ is the total probability mass on
the top-$k$ keys for the $j$-th cluster given by:
\begin{align}
    \hat{m}_j = \sum_{r=1}^{N}T_{jr}A^c_{jr}.
\end{align}
Note that we can compute the attention weights $A^t_i$ on the top-$k$ keys by
first taking sparse dot-product of $Q_i$ with the top-$k$ keys followed
by the softmax activation and rescaling with total probablity mass $m_j$. For
the rest of the keys, the attention weight is the clustered-attention weight
$A^c_i$.

Similarly, the new values $\hat{V}_i$ can be decomposed into the following
two terms,
\begin{align}
   \hat{V}_i = \hat{V}^{t}_{i} + \hat{V}^{b}_{i}, \label{eq:supp_improved_values_fast}
\end{align}
where $\hat{V}^{t}_{i}$ is weighted average of the values corresponding to the
top-$k$ keys with weights being the improved attention on the top-$k$ keys.
$\hat{V}^b_i$ is the weighted average of the rest of the values with weights being
the clustered attention $A^c_i$.  The following equations show how we compute
$\hat{V}^t_i$ and $\hat{V}^b_i$,
\begin{equation}
   \hat{V}^{t}_{i} = \sum_{l=1}^{N}T_{jl} A^t_{il} V_l, \label{eq:supp_improved_values_top}
\end{equation}
\begin{equation}
   \hat{V}^{b}_{i} = \sum_{l=1}^{N}(1-T_{jl}) A^c_{il} V_l, \label{eq:supp_improved_values_bottom}
\end{equation}
Note that $\hat{V}^t_i$ is weighted average of $k$ values for each query and thus requires
$\bigO{N k D_v}$ operations. $\hat{V}^b_i$ only needs to be computed once per-cluster centroid
and thus requires $\bigO{N C D_v}$ operations.

In figure \ref{fig:supp_topk_fc} we present the \emph{i-clustered} attention
computation for the same example sequence with $8$ queries and the number of
clusters and top-$k$ keys set to 3. The lower half of the figure shows the new
value $\hat{V}^t$ computed by first taking sparse dot-products with the top 3 keys to
get the attention weights. This is followed by taking the weighted average of
the 3 correponding values. The top half of the figure shows the $\hat{V}^b$
computation. This is same as clustered attention computation but with attention
weights corresponding to top $3$ keys set to $0$ for $A^c$. The resulting
values $\hat{V}$ is the sum of $\hat{V}^b$ and $\hat{V}^t$.

\begin{figure}[h]
    \includegraphics[width=\linewidth]{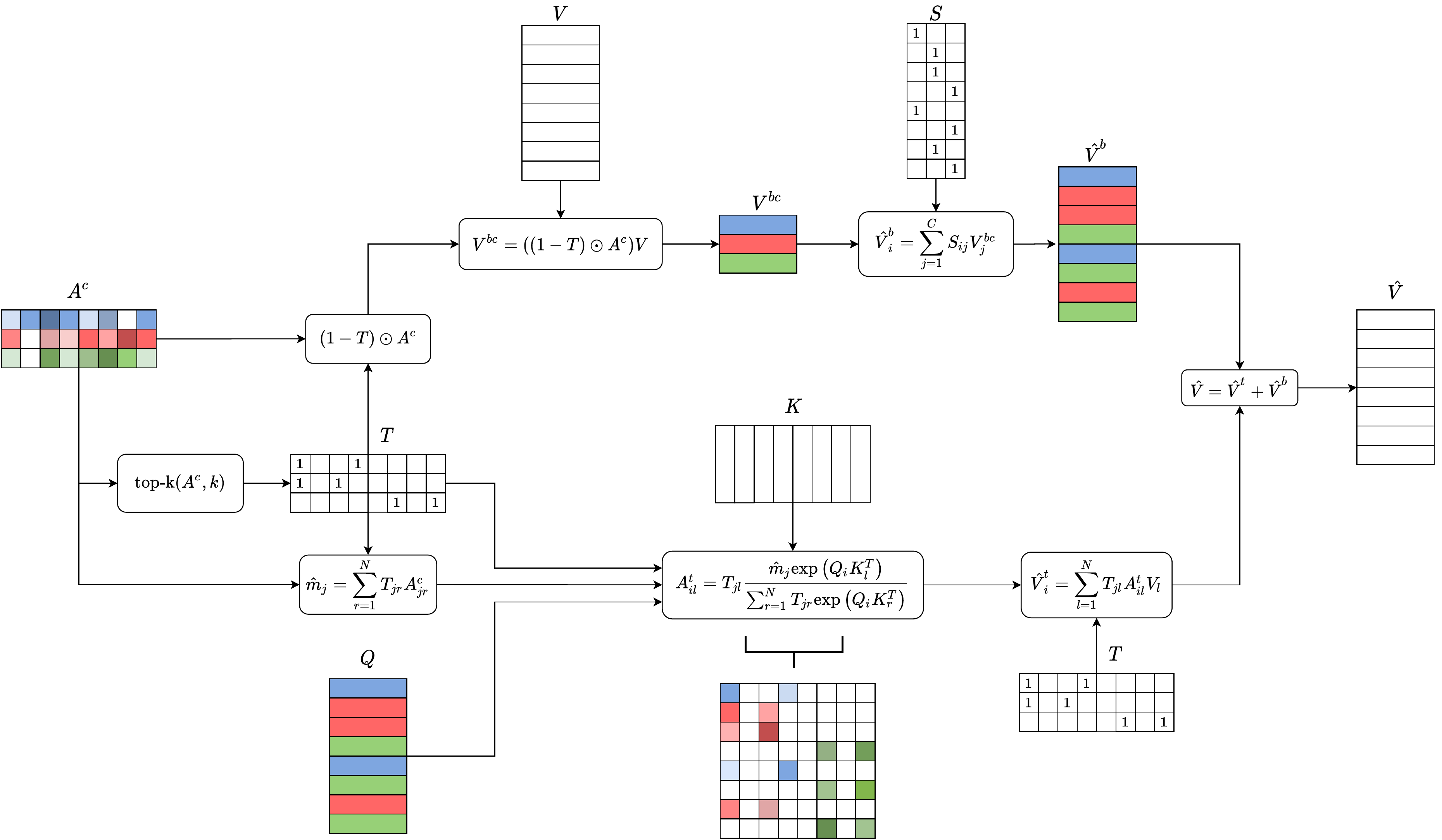}
    \caption{
        Flow-chart demonstrating the compuation for \emph{i-clustered}
        attention. The lower half of the figure shows the new value $\hat{V}^t$
        computed by sparse dot-products with the keys $K$ and values $V$
        corresponding to the the top-$k$ keys in $T$. The top half of the
        figure shows the computation for $\hat{V}^b$ which is the weighted average of
        the rest of the values with weights coming from the clustered attention
        $A^c$. The resulting values $\hat{V}$ is the sum of $\hat{V}^b$ and $\hat{V}^t$.
        For more details refer \S~\ref{subsec:supp_improved_attn} or to the \S~{3.3}
        in the main paper.
    }
    \label{fig:supp_topk_fc}
\end{figure}

\section{Quality of the approximation}\label{subsec:supp_quality_improved}

\begin{theorem} \label{thm:supp_improved_attention}
For the $i$-th query belonging to the $j$-th cluster, the improved clustered
attention $A^t_i$ and clustered attention $A^c_j$ relate to the full attention
$A_i$ as follows,
\begin{equation}
    \norm{A^t_i - A_i}_1 \leq \norm{A^c_j - A_i}_1
\end{equation}
\end{theorem}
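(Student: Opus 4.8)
The plan is to split the $L_1$ distance over the $N$ keys into the contribution from the top-$k$ keys of cluster $j$ and the contribution from the remaining keys, and to exploit the fact that the two approximations agree off the top-$k$ block. Writing $\mathcal{T}_j = \{l : T_{jl} = 1\}$ for the index set of the top-$k$ keys, the definition in equation \ref{eq:improved_attention} gives $A^t_{il} = A^c_{jl}$ whenever $T_{jl} = 0$. Hence every term indexed by $l \notin \mathcal{T}_j$ contributes the identical quantity $\abs{A^c_{jl} - A_{il}}$ to both $\norm{A^t_i - A_i}_1$ and $\norm{A^c_j - A_i}_1$, so these terms cancel from the two sides of the claimed inequality. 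It therefore suffices to establish the inequality restricted to the top-$k$ block, namely $\sum_{l \in \mathcal{T}_j} \abs{A^t_{il} - A_{il}} \leq \sum_{l \in \mathcal{T}_j} \abs{A^c_{jl} - A_{il}}$.

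First I would rewrite the left-hand side in closed form. On the top-$k$ block, $A^t_{il} = \hat{m}_j \, \expo{Q_i K_l^T} \big/ \sum_{r \in \mathcal{T}_j} \expo{Q_i K_r^T}$, and the key observation is that the restricted ratio equals $A_{il} / m_i$, where $m_i = \sum_{l \in \mathcal{T}_j} A_{il}$ is the true attention mass that query $i$ places on the top-$k$ keys. This follows because the full softmax normalizer $\sum_{r=1}^N \expo{Q_i K_r^T}$ cancels between $A_{il}$ and $m_i$, leaving exactly the top-$k$-restricted softmax. Consequently $A^t_{il} = (\hat{m}_j / m_i)\, A_{il}$ on the whole block, so the left-hand side collapses to a single scalar factor times a sum:
$$\sum_{l \in \mathcal{T}_j} \abs{A^t_{il} - A_{il}} = \abs{\frac{\hat{m}_j}{m_i} - 1} \sum_{l \in \mathcal{T}_j} A_{il} = \abs{\hat{m}_j - m_i}.$$

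The final step is to recognize $\hat{m}_j = \sum_{l \in \mathcal{T}_j} A^c_{jl}$ by its definition and $m_i = \sum_{l \in \mathcal{T}_j} A_{il}$, so that $\abs{\hat{m}_j - m_i} = \abs{\sum_{l \in \mathcal{T}_j}(A^c_{jl} - A_{il})}$, and then apply the triangle inequality to pull the absolute value inside the sum, giving $\abs{\hat{m}_j - m_i} \leq \sum_{l \in \mathcal{T}_j} \abs{A^c_{jl} - A_{il}}$, which is precisely the right-hand side. I expect the only subtle point to be the identification $A^t_{il} = (\hat{m}_j/m_i)\,A_{il}$ on the top-$k$ keys: once the shared softmax normalizer is factored out, all the absolute values on the left telescope into the single term $\abs{\hat{m}_j - m_i}$, after which the triangle inequality closes the argument. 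Everything else is bookkeeping across the two index blocks, and no smoothness or Lipschitz property is needed — the result is purely an $L_1$ redistribution argument.
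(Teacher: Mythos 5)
Your proposal is correct and follows essentially the same route as the paper's proof: the same split of the $L_1$ norm into the top-$k$ block and its complement (where the two approximations coincide), the same key identity $A^t_{il} = \frac{\hat{m}_j}{m_i} A_{il}$ obtained by cancelling the full softmax normalizer, the same collapse of the block error to $\abs{\hat{m}_j - m_i}$, and the same final triangle inequality. The only cosmetic difference is that you work with the index set $\mathcal{T}_j$ directly while the paper reindexes the top-$k$ keys to $\{1,\dots,k\}$ without loss of generality.
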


\begin{proof}

As discussed before, the improved attention matrix approximation $A^t_i$ for
the query, $Q_i$ is computed as follows:
\begin{equation}
    A^t_{il} = \begin{cases}
                   \frac{\hat{m}_j \expo{Q_i K_l^T}}
                             {\sum_{r=1}^N T_{jr} \expo{Q_i K_r^T}}
                             & \text{if } T_{jl} = 1 \\
                   A^c_{il} & \text{otherwise}
               \end{cases}, \label{eq:supp_improved_attention_2}
\end{equation}

where, $T \in \{0, 1\}^{C \times N}$, stores the top-$k$ keys for each cluster,
$T_{ji} = 1$ if the $i$-th key is among the top-$k$ keys for the $j$-th cluster
and 0 otherwise. $\hat{m}_j$ is the total probability mass on the
top-$k$ keys for the $j$-th cluster, computed as follows:
\begin{align}
    \hat{m}_j = \sum_{r=1}^{N}T_{jr}A^c_{jr}.
\end{align}

Given the full attention $A_i$, equation \ref{eq:supp_improved_attention_2} can be
simplified to 
\begin{align}
     A^t_{il} &= \begin{cases}
                    \frac{\hat{m}_j}{m_i} A_{il}
                              & \text{if } T_{jl} = 1 \\
                    A^c_{il} & \text{otherwise}  \\
                 \end{cases},
                 \label{eq:supp_simplified_improved_attention}
\end{align}
where, $m_i$ is the total probability mass on the same top-$k$ keys for the
$i$-th query, computed using the true attention $A_i$, as follows:
\begin{align}
    m_i &= 
           \frac{\sum_{r=1}^N T_{jr} \expo{Q_i K_r^T}}
                    {\sum_{r=1}^N \expo{Q_i K_r^T}} \\
        &= \sum_{r=1}^{N}T_{jr}A_{ir}.
\end{align}


Without loss of generality, let us assume, $T_{jl}=1 \quad \forall \quad l \in
\{1,\dots,k\}$ and $T_{jl}=0 \quad \forall \quad l \in \{k+1,\dots,N\}$.

In this case, equation \ref{eq:supp_simplified_improved_attention} can be written
as:
\begin{align}
     A^t_{il} &= \begin{cases}
                    \frac{\hat{m}_j}{m_i} A_{il}
                              & \text{if} \quad l \leq k \\
                    A^c_{il} & \text{if} \quad l \geq k+1 \\
                 \end{cases}.
                 \label{eq:supp_sp_improved_attention} 
\end{align}
The total probability masses on the top-$k$ keys, $m_i$ and $\hat{m}_j$ can
now be expressed as:
\begin{align}
      m_i  &= \sum_{r=1}^{k}A_{ir}. \\
      \hat{m}_j  &= \sum_{r=1}^{k}A^c_{jr}.
\end{align}

From equation \ref{eq:supp_sp_improved_attention} it is clear that the clustered
attention, $A^c_{i}$, and the improved clustered attention, $A^t_i$, only
differ on the keys $\{1,\dots,k\}$.  Thus, it suffices to show that $A^t_i$ has
lower approximation error on these keys.  The approximation error on the
top-$k$ keys $\{1,\dots,k\}$, let it be $e_t$, between the
\emph{i-clustered} attention and the \emph{full} attention is as
follows:
 
\begin{align}
e_t &=  \sum_{l=1}^{k} \abs{A_{il} - A^t_{il}} \\
    &= \sum_{l=1}^{k} \abs{A_{il} - A_{il} \frac{\hat{m}_j}{m_i}} \\
    &= \sum_{l=1}^{k} A_{il}\abs{1 - \frac{\hat{m}_j}{m_i}} \\
    &= \abs{1 - \frac{\hat{m}_j}{m_i}} \sum_{l=1}^{k} A_{il} \\
    &= m_i\abs{1 - \frac{\hat{m}_j}{m_i}} \\
    &= \abs{m_i - \hat{m}_j} \\
    &= \abs{\sum_{l=1}^{k}A_{il} - A^c_{jl}} \\
    &\leq \sum_{l=1}^{k}\abs{A_{il} - A^c_{jl}}
\end{align}

Therefore,
\begin{align}
\norm{A_i - A^t_i}_1
    &=  \sum_{l=1}^{k} \abs{A_{il} - A^t_{il}} +
         \sum_{l=k+1}^{N}\abs{A_{il} - A^t_{il}} \\
    &=  \sum_{l=1}^{k} \abs{A_{il} - A^t_{il}} +
        \sum_{l=k+1}^{N}\abs{A_{il} - A^c_{jl}} \\
    &\leq \sum_{l=1}^{k}\abs{A_{il} - A^c_{jl}} +
        \sum_{l=k+1}^{N}\abs{A_{il} - A^c_{jl}} \\
    &\leq \norm{A_i - A^c_i}_1
\end{align}
\end{proof}

\section{Experiments}

\subsection{Time and Memory Benchmark} \label{subsec:supp_benchmark}

To measure the computational cost, we compare the memory consumption and
computation time on artificially generated sequences of various lengths. For
clustered attention we use $100$ clusters, $63$ bits for the LSH, and $10$ Lloyd
iterations for the K-Means. For the improved clustered attention, we use the
same configuration with $k=32$. For Reformer, we evaluate on two variants
using $1$ and $4$ rounds of hashing. All models consist of $1$ layer with
$6$ attention heads, embedding dimension of $64$ for each head, and a
feed-forward dimension of $1536$.

In this experiment, we measure the required memory and GPU time \emph{per
single sequence element} to perform a forward/backward pass for the various
self-attention models.  Figure \ref{fig:supp_benchmark} illustrates how these
metrics evolve as the sequence length increases from $N=2^9$ to $N=2^{15}$.
For a fair comparison, we use the maximum possible batch size for each method
and we divide the computational cost and memory with the number of samples in
each batch and the sequence length.

We note that, in contrast to all other methods, vanilla transformer scales
quadratically with respect to the sequence length and does not fit in GPU
memory for sequences longer than $2^{13}$ elements. All other methods scale
linearly. Clustered attention becomes faster than the vanilla transformer for
sequences with $1000$ elements or more, while improved clustered attention
surpasses it for sequences with $2000$ elements. Note that with respect to per
sample memory, both clustered and improved clustered attention perform better
than all other methods.  This can be explained by the fact that our method does
not require storing intermediate results to compute the gradients from multiple
hashing rounds as Reformer does. It can be seen, that lsh-$1$ is faster
than the improved clustered clustered attention, however, as also mentioned by
\citep{kitaev2020reformer} Reformer requires multiple hashing rounds to
generalize.

\begin{figure}[h]
    \begin{subfigure}{0.96\columnwidth}
        \centering
        \includegraphics[width=\columnwidth]{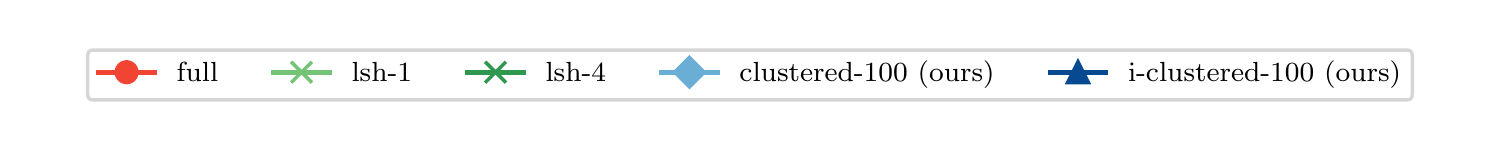}
    \end{subfigure}
    \begin{subfigure}{0.48\columnwidth}
        \centering
        \includegraphics[width=\columnwidth]{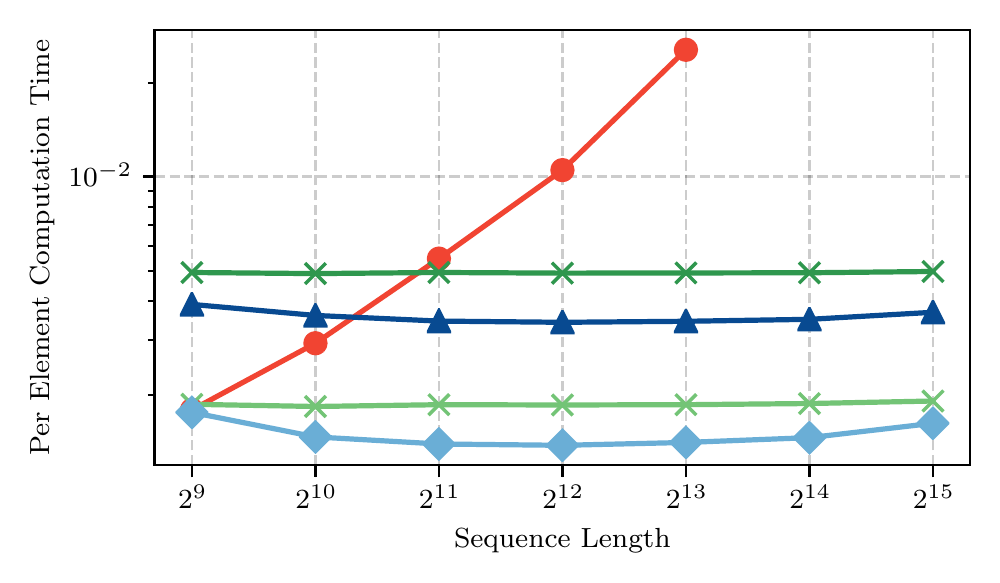}
        \caption{Per Element Time}\label{fig:supp_benchmark_time}
    \end{subfigure}
    \begin{subfigure}{0.48\columnwidth}
        \centering
        \includegraphics[width=\columnwidth]{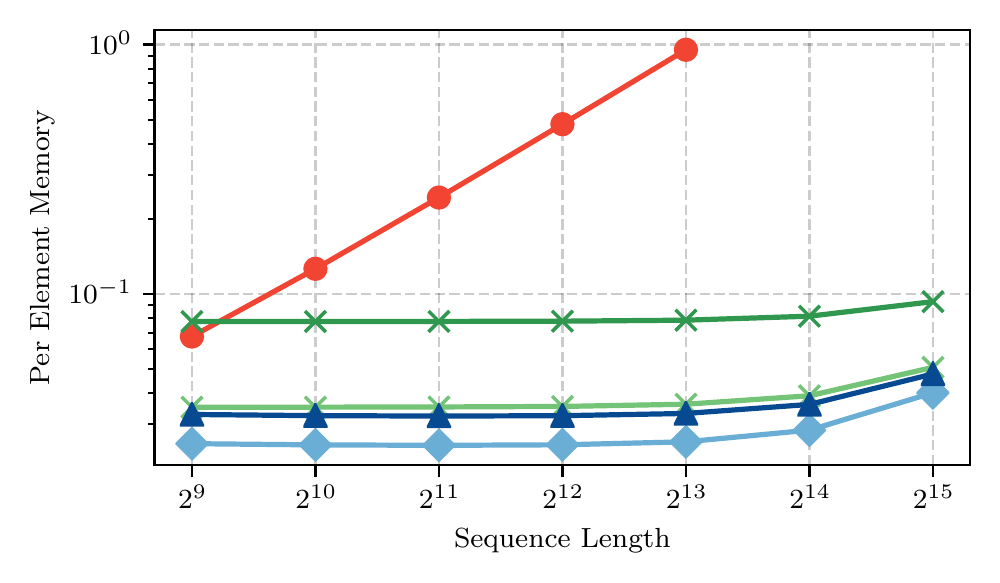}
        \caption{Per Element Memory}\label{fig:supp_benchmark_memory}
    \end{subfigure}
    \caption{Per element GPU time and memory consumption for a forward/backward
         pass. All models, except full, scale linearly with respect to the
         sequence length since they have constant time and memory per
         element. Detailed analysis can be found in
         \S~\ref{subsec:supp_benchmark}.}
    \label{fig:supp_benchmark}
\end{figure}

\subsection{Ablation on clusters and sequence length} \label{sec:supp_ablation}

\begin{figure}
    \centering
    \textbf{Accuracy with respect to clusters and hashing rounds}\\[1em]
    \begin{subfigure}[t]{0.32\columnwidth}
        \centering
        \includegraphics[width=\columnwidth]{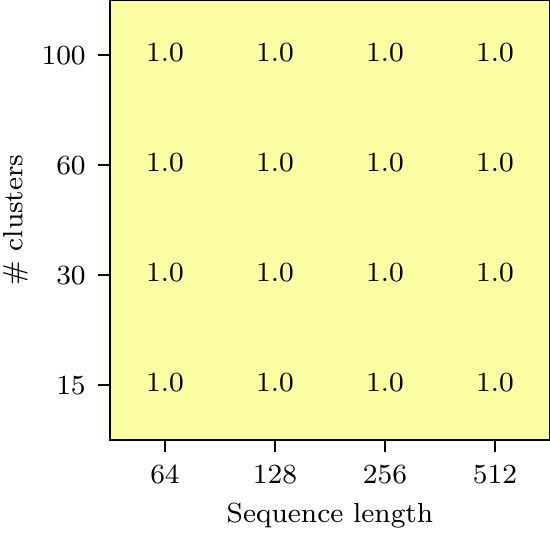}
        \caption{Improved clustered} \label{fig:supp_ablation_ic}
    \end{subfigure}\hfill
    \begin{subfigure}[t]{0.32\columnwidth}
        \centering
        \includegraphics[width=\columnwidth]{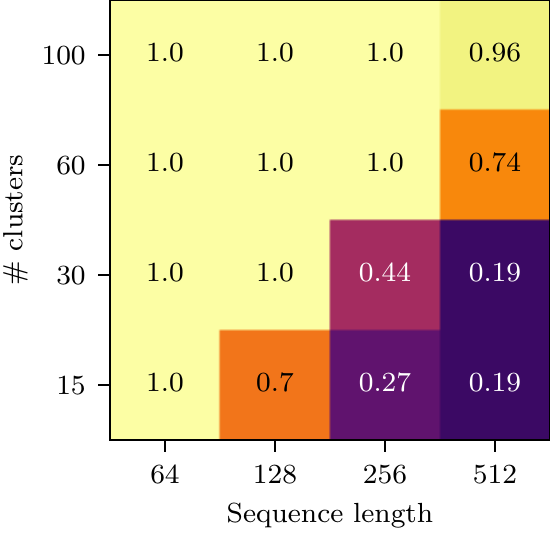}
        \caption{Clustered} \label{fig:supp_ablation_c}
    \end{subfigure}\hfill
    \begin{subfigure}[t]{0.32\columnwidth}
        \centering
        \includegraphics[width=\columnwidth]{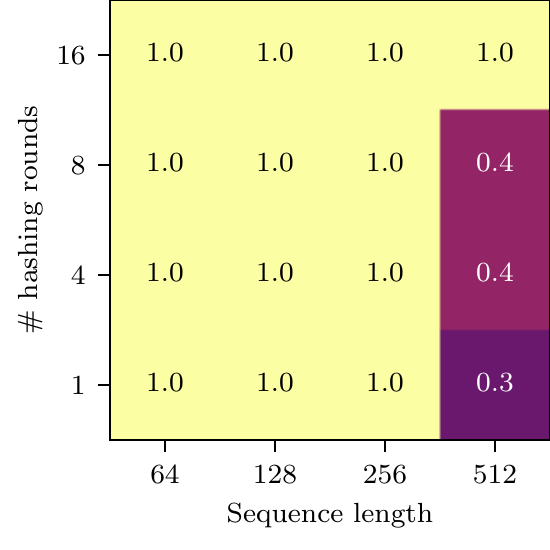}
        \caption{Reformer} \label{fig:supp_ablation_lsh}
    \end{subfigure}
    \caption{The heatmaps depict the achieved accuracy on an artificial copy
             task (\S~\ref{sec:supp_ablation}) as the sequence length, the number of
             clusters and the number of hashing rounds varies. Improved
             clustered (\ref{fig:supp_ablation_ic}) is the only fast transformer
             variant that can solve the task perfectly for any sequence length
             and number of clusters combination.}
    \label{fig:supp_ablation}
\end{figure}

Following \citep{kitaev2020reformer}, we introduce a synthetic task to analyze
the relationship between the number of clusters and sequence length. In our
task, the transformer models need to copy some symbols that are masked out from
either the first or second half of the sequence. In particular, we generate a
random sequence of tokens and we prepend a unique separator token, let it be
$0$. The sequence is then copied to get a target of the form $0w0w$, where $w
\in \{1, \dots, C\}^L$, $C$ is the number of possible symbols and $L$ is the
sequence length.  To generate the input, we replace some symbols from the first
half of the sequence and some different symbols from the second half, such that
the target sequence can be reconstructed from the input. An example of an input
output pair with $L=4$ can be seen in figure \ref{fig:supp_masked_copy}. Note that
to solve this task, transformers simply need to learn to attend to the
corresponding tokens in the two identical halves of the sequence.
\bgroup
\renewcommand{\arraystretch}{1.2}
\begin{figure}[H]
    \centering
    \begin{tabular}{r|c|c|c|c|c|c|c|c|c|c|}
        \cline{2-11}
        \textbf{Input}  & 0 & 4 & M & 2 & 2 & 0 & 4 & 5 & M & 2 \\
        \cline{2-11}
        \textbf{Output} & 0 & 4 & 5 & 2 & 2 & 0 & 4 & 5 & 2 & 2 \\
        \cline{2-11}
    \end{tabular}
    \caption{Example of an input and output pair for the masked copy task. M
    denotes the masked out tokens.}
    \label{fig:supp_masked_copy}
\end{figure}
\egroup
We set the sequence length $L$ to one of $\{31, 63, 127, 255\}$ which means
the input length varies between $N=2^6$ and $N=2^9$. For each sequence, we
sample tokens uniformly from $\{1, \dots, 10\}$ and randomly mask out $20\%$ of
the tokens. To analyze the impact of number of clusters on performance, we
train full transformer as well as clustered variants with different number of
clusters and Reformer with different number of hashing rounds.

All transformer variants consist of $4$ layers, $6$ attention heads, embedding
dimension of $32$ for each head, and feed-forward dimension of $768$. For both
clustered and improved clustered attention, we set the number of bits for LSH
to $63$ and the number of Lloyd iterations for the K-Means to $10$. Both
clustered and improved clustered attention are trained with $15$, $30$, $60$
and $100$ clusters. We also train Reformer with $1$, $4$, $8$ and $16$ hashing
rounds. Finally, all models are trained using R-Adam optimizer
\cite{liu2020radam} with a learning rate of $0.0002$, batch size of $32$ for
$5000$ iterations.

In figure \ref{fig:supp_ablation}, we illustrate the results of this experiment as
heatmaps depicting the achieved accuracy for a given combination of number of
clusters and sequence length for clustered transformers and number of hashing
rounds and sequence length for Reformer. Note that the vanilla transformer
solves the task perfectly for all sequence lengths. We observe that both
clustered (Fig.~\ref{fig:supp_ablation_c}) and Reformer
(Fig.~\ref{fig:supp_ablation_lsh}) require more clusters or more rounds as the
sequence length increases. However, improved clustered achieves the same
performance as vanilla transformers, namely \emph{perfect accuracy}, for every
number of clusters and sequence length combination. This result increases our
confidence that the required number of clusters for our method is not a
function of the sequence length but of the task at hand.

\subsection{Automatic Speech Recognition}
In this section, we present the details for the ASR experiments such as
transformer architecture, optimizer and learning rate schedule. As mentioned
in the main paper, for \emph{i-clustered}, unless specified, $k$ is set to 32.
Furthermore, all transformers have $6$ heads with an embedding dimension of
$32$ on each head and feed-forward dimension of $768$. Other architectural
details specific to experiments are described later.

\subsubsection{Wall Street Journal}

\boldparagraph{Convergence Behaviour}  \label{subsec:supp_wsj_convergence}

For this experiment, we train transformer with full, clustered and Reformer
attention variants. All models consist of $9$ layers. For
Reformer, we train two variants with $1$ and $4$ rounds of hashing with chunk
size fixed to $32$ as suggested. For clustered and improved clustered attention
we set the number of clusters to $100$. We also set the number of Lloyd
iterations for K-Means to $10$ and the bits for LSH to $63$.  All models are
trained to convergence using the R-Adam optimizer \cite{liu2020radam} with a
learning rate of $0.0001$, max gradient norm set to $10.0$ and and weight decay
of $0.01$. The learning rate is dropped when the validation loss plateaus. For
each model we select the largest batch size that fits the GPU.  The \emph{full}
attention model was trained with a batch size of $2$ while the clustered
variants: \emph{clustered} and \emph{i-clustered} could fit batch sizes of $14$
and $10$ respectively.  For Reformer variants: \emph{lsh-$1$} and
\emph{lsh-$4$}, batch sizes of $8$ and $6$ were used.

\begin{figure}[h]
    \centering
    \begin{subfigure}{0.45\columnwidth}
        \includegraphics[width=\columnwidth]{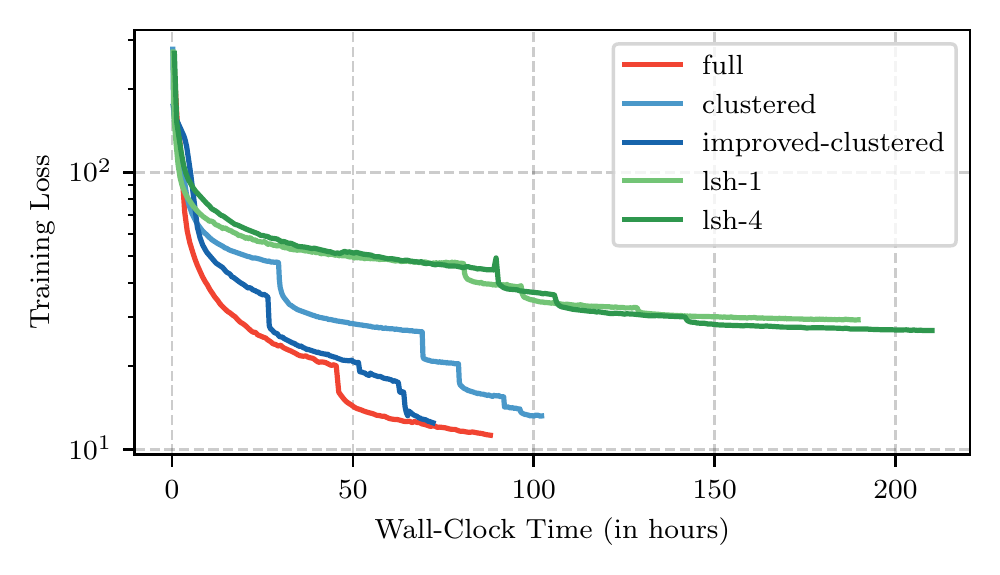}
        \caption{Wall Street Journal} \label{fig:supp_wsj_convergence}
    \end{subfigure}
    \begin{subfigure}{0.45\columnwidth}
        \includegraphics[width=\columnwidth]{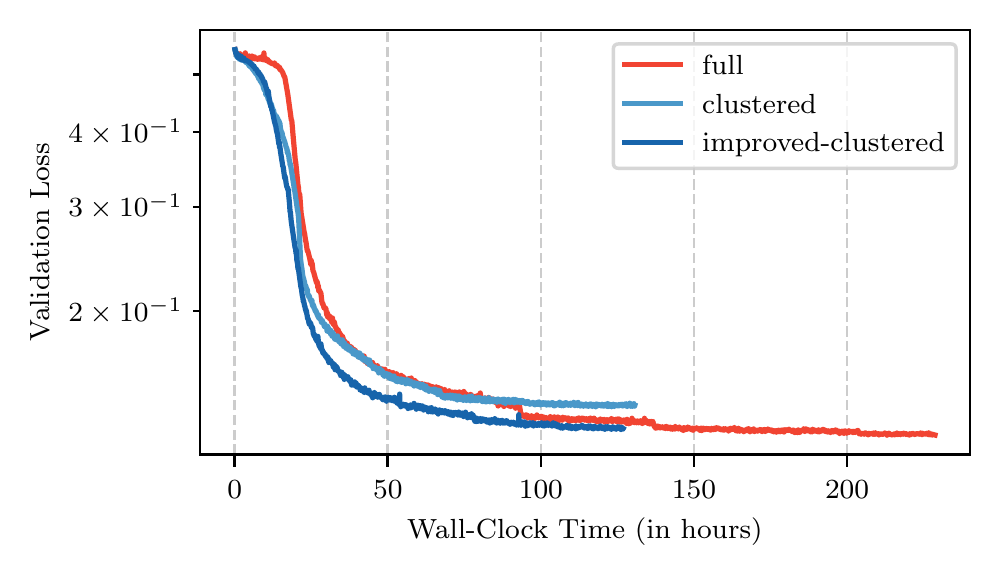}
        \caption{Switchboard} \label{fig:supp_swbd_convergence}
    \end{subfigure}
    \caption{
        We show training/validation loss convergence for different transformer
        variants. Only \emph{i-clustered} has a faster or comparable wall-clock
        convergence to full attention. Both the clustered variants are have a
        significantly better convergence than both \emph{lsh-$1$} and
        \emph{lsh-$4$}. Note that due to a smaller batch size \emph{full} makes
        many more updates than all other transformer variants. More details can
        be found in \S~\ref{subsec:supp_wsj_convergence} and
        \S~\ref{subsec:supp_swbd_convergence}.
    }
\end{figure}

In figure \ref{fig:supp_wsj_convergence}, we show the training loss convergence for
different transformer variants. It can be seen that \emph{i-clustered} has a
much faster convergence than the  \emph{clustered} attention. This shows that
the improved clustered attention indeed approximates the full attention better.
More importantly, only the \emph{i-clustered} attention has a comparable
wall-clock convergence.  Given that \emph{full} has a much smaller batch size,
it make many more updates per-epoch. We think that a slightly smaller
batchsize with more updates would have been a better choice for the clustered
transformers w.r.t. the wall-clock convergence. This is reflected in the
Switchboard experiments where the batchsizes for clustered variants were
smaller due to more layers.  Finally, as can be seen from the wall-clock
convergence, the clustered transformers significantly outperform the Reformer
variants.

\boldparagraph{Speed-Accuracy Tradeoff}  \label{subsec:supp_wsj_finetune}

As described in the main paper, for this task we additionally train \emph{full}
with $4$ and $6$ layers. Similary, we train \emph{clustered} with 9 layers, and
$200$ and $300$ clusters. We also train an \emph{i-clustered} model with 9 layer
and $200$ clusters, and smaller models with 6 layers, and $100$ and
$200$ clusters.

For \emph{clustered} and \emph{i-clustered} variants with 9 layers, we
finetuned the previously described models trained with $100$ clusters. We
finetuned for $15$ epochs with a learning rate of $0.00001$. We train \emph{full} with
$4$ and $6$ layers to convergence in a similar fashion to the \emph{full} with
$9$ layers described previously. Finally, for \emph{i-clustered}, we first
trained model with $6$ layers and $100$ clusters using the training strategy
used for $9$ layers and $100$ clusters. We then finetuned this model for $15$
epochs using $200$ clusters and a learning rate of $0.00001$.

\subsubsection{Switchboard}

\boldparagraph{Convergence Behaviour}  \label{subsec:supp_swbd_convergence}

For this experiment, we train transformer with full and clustered attention
variants. All models consist of $12$ layers. For clustered and
improved clustered attention we set the number of clusters to $100$. We also
set the number of Lloyd iterations for K-Means to $10$ and the bits for LSH to
$63$.

Following common practice for flat-start lattice-free MMI training, we train
over multiple gpus with weight averaging for synchronization as described in
\cite{povey2015parallel}.  Specfically, we modify the \textit{e2e} training
recipe for the \textbf{Wall Street Journal} in Kaldi \cite{povey2011kaldi} with
the following two key differences: first, the acoustic model training is done
in PyTorch and second, we use R-Adam optimizer instead on natural stochastic
gradient descent.

All models are trained using the R-Adam optimizer with a learning rate of
$0.0002$, max gradient norm set to $10.0$ and and weight decay of $0.01$. The
learning rate is dropped when the validation loss plateaus. We use the word
error rate (WER) on the validation set for early stopping and model selection.
The \emph{full} attention model is trained with a batch size of $2$
while the clustered variants: \emph{clustered} and \emph{i-clustered} are
trained with a batch size of $6$.

In figure \ref{fig:supp_swbd_convergence}, we show the training loss convergence for
different transformer variants. It can be seen that \emph{i-clustered} has the
fastest convergence for this setup. Note that the overall training time for
\emph{clustered} attention is still less than that of \emph{full} as it starts
to overfit early on the validation set WER.

\boldparagraph{Speed-Accuracy Tradeoff}  \label{subsec:supp_swbd_finetune}

For this task we additionally train \emph{full} with $6$ and $8$ layers.
Similary, we train \emph{clustered} with 12 layers, and $200$ and $300$
clusters. We also train \emph{i-clustered} with 12 layer and $200$ clusters,
and smaller models with 8 layers, and $100$ and $200$ clusters.

For \emph{clustered} and \emph{i-clustered} variants with 12 layers, we
finetuned the previously described models trained with $100$ clusters. We
finetuned for $5$ epochs with a learning rate of $0.00001$. Once again,
\emph{full} with $6$ and $8$ layers were trained to convergence similar to
\emph{full} with $12$ layers described previously. Finally, for
\emph{i-clustered}  with $8$ layers, we first train a model with $100$ clusters
using the training strategy used for $12$ layers and $100$ clusters. We then
finetuned this model for $5$ epochs using $200$ clusters and a learning rate
of $0.00001$.

\subsection{RoBERTa Approximation} \label{subsec:supp_approximation}

In this section we provide a qualitative comparison between the
\emph{full} attention, and the clustered attention variants \emph{clustered}
and \emph{i-clustered} used for approximation. As described in main paper,
we use $25$ clusters for both attention variants. In Figure \ref{fig:attn_qualitative}
we show the attention distribution for the question tokens for a randomly
selected question-context tuple from the SQuAD dataset. For each token in the
question we show the attention distribution over the input sequence formed by
concatenating question and context tokens with \emph{CLS} and \emph{SEP} tokens
appended. It can be seen that with only few clusters, improved clustered
approximates the full attention very closely even when the attention
distribution has complicated and sparse patterns. In contrast, clustered
attention fails to capture such attention distribution during approximation.
Moreover, it can further be seen that for almost all question tokens, both full
and improved clustered have the same tokens with the highest attention weights.
This further strengthens our believe that improved clustered attention can
approximate a wide range of complicated attention patterns.

\begin{figure}[h]
    \centering
    \begin{subfigure}{\columnwidth}
        \FramedBox{3.8cm}{0.93\columnwidth}
        {Manning finished the year with a career-low
        67.9 passer rating, throwing for 2,249 yards and nine touchdowns, with
        17 interceptions. In contrast, Osweiler threw for 1,967 yards, 10
        touchdowns and six interceptions for a rating of 86.4. Veteran receiver
        \textcolor{red}{Demaryius Thomas} led the team with 105 receptions for 1,304 yards and
        six touchdowns, while Emmanuel Sanders caught 76 passes for 1,135 yards
        and six scores, while adding another 106 yards returning punts. Tight
        end Owen Daniels was also a big element of the passing game with 46
        receptions for 517 yards. Running back C. J. Anderson was the team's
        leading rusher 863 yards and seven touchdowns, while also catching 25
        passes for 183 yards. Running back Ronnie Hillman also made a big
        impact with 720 yards, five touchdowns, 24 receptions, and a 4.7 yards
        per carry average. Overall, the offense ranked 19th in scoring with 355
        points and did not have any Pro Bowl selections.}
        \caption{\emph{context}} \label{fig:supp_context}
    \end{subfigure}
    \begin{subfigure}{0.8\columnwidth}
        \includegraphics[width=\columnwidth]{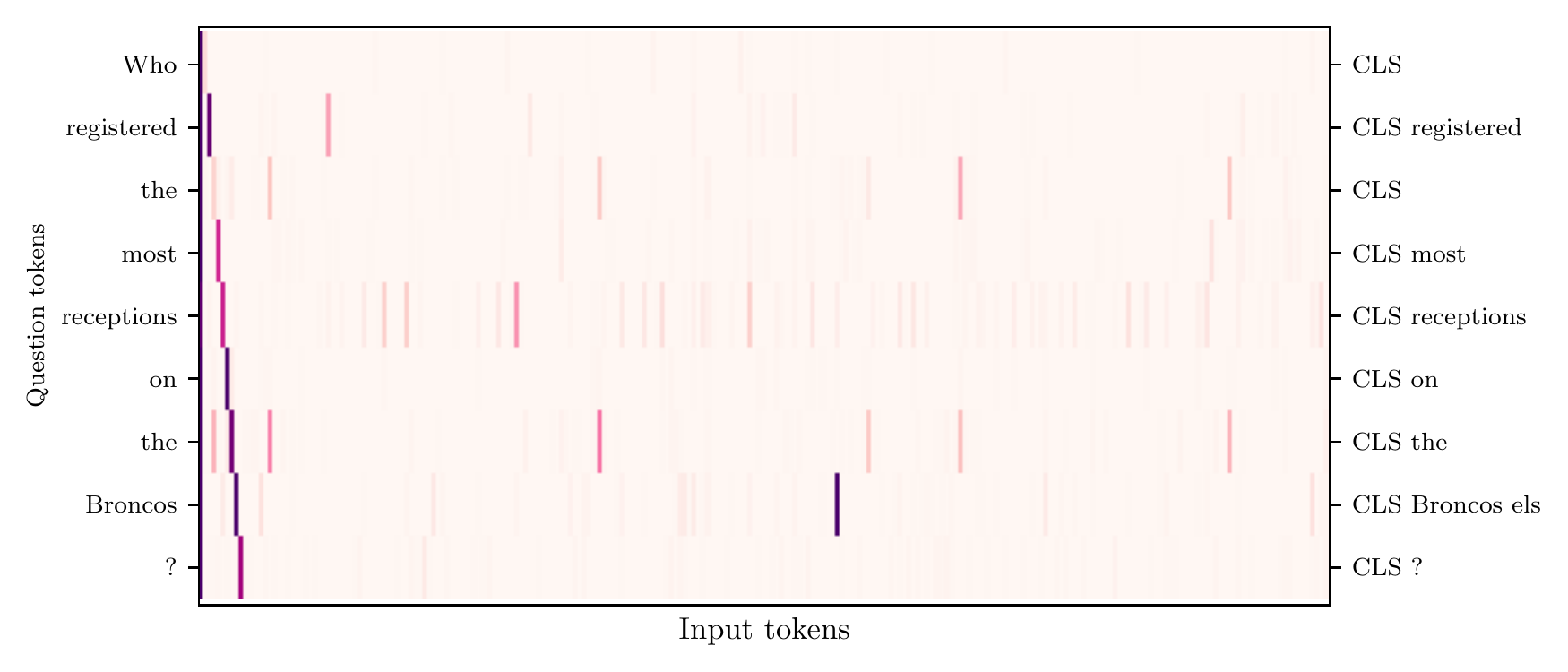}
        \caption{\emph{full}} \label{fig:supp_full_appx}
    \end{subfigure}
    \begin{subfigure}{0.8\columnwidth}
        \includegraphics[width=\columnwidth]{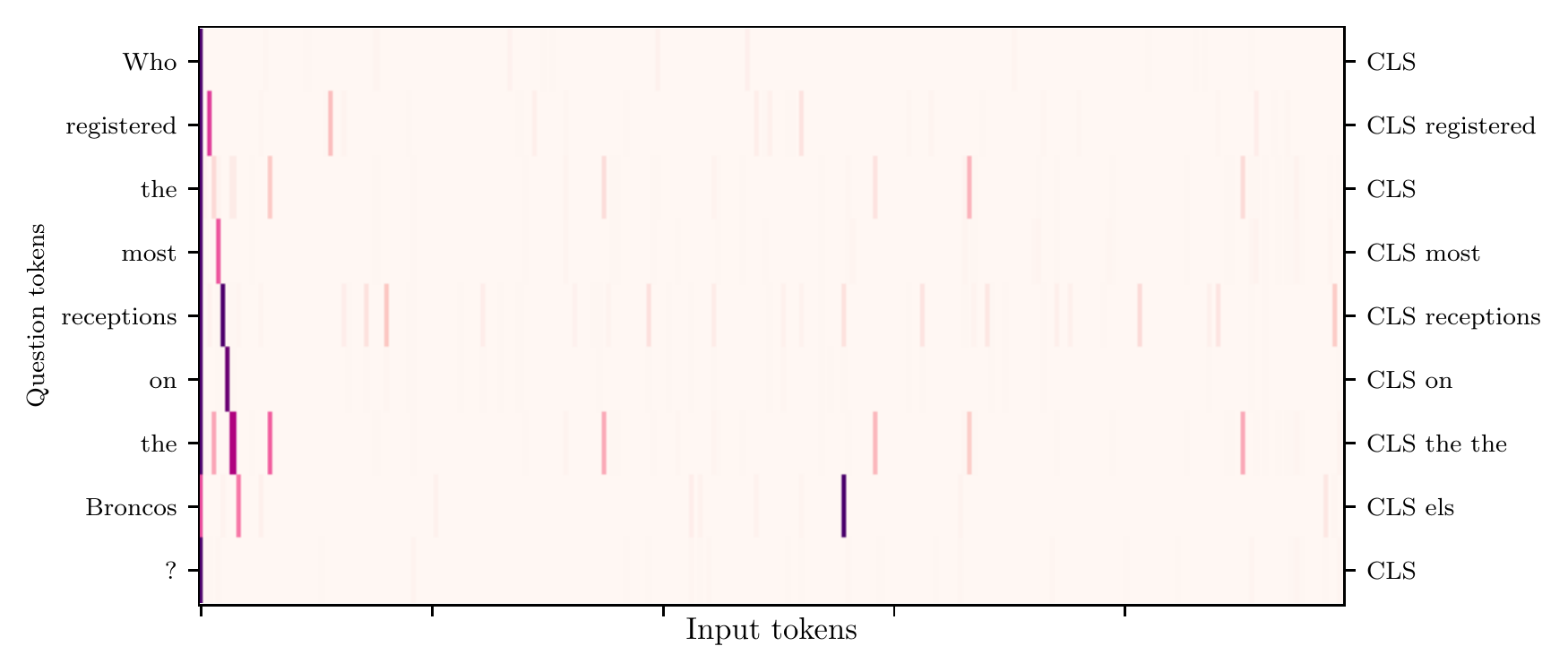}
        \caption{\emph{improved-clustered}} \label{fig:supp_iclus_appx}
    \end{subfigure}
    \begin{subfigure}{0.8\columnwidth}
        \includegraphics[width=\columnwidth]{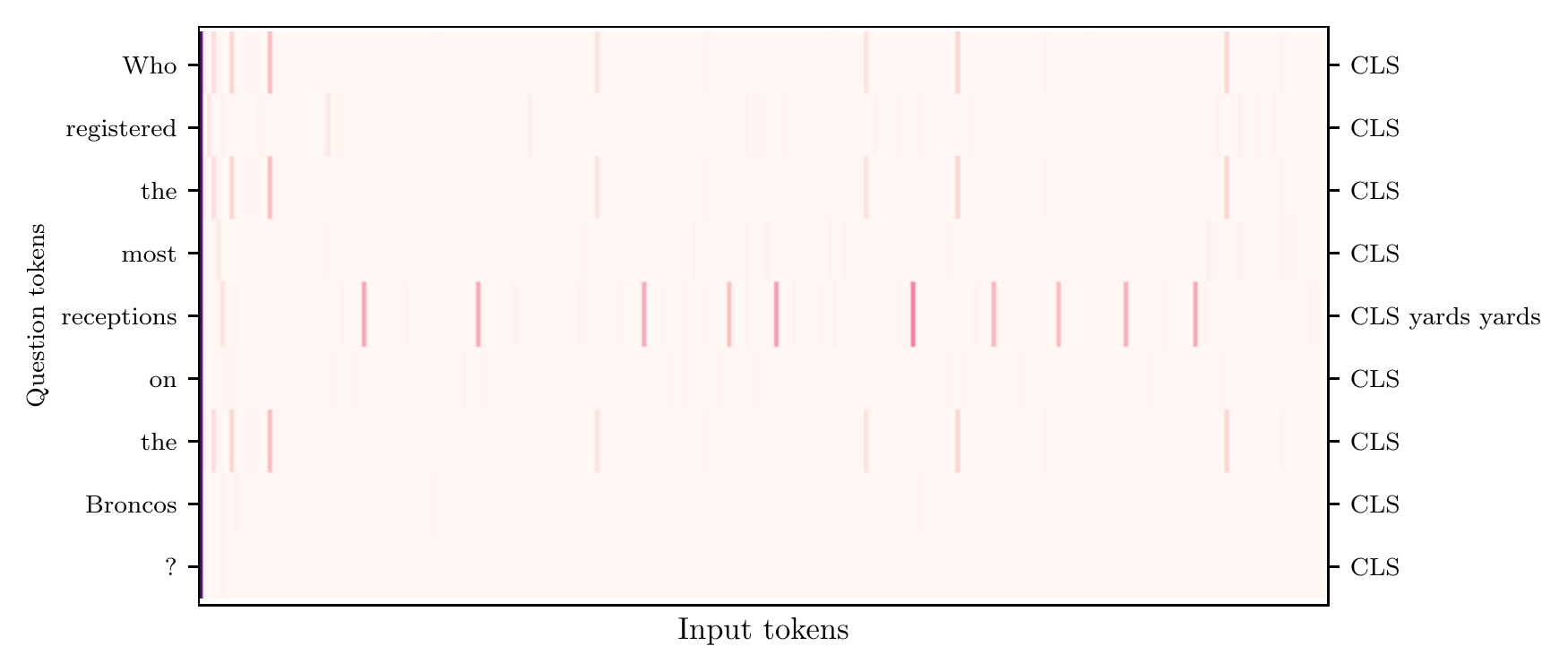}
        \caption{\emph{clustered}} \label{fig:supp_clus_appx}
    \end{subfigure}

    \caption{
        Attention matrices for question-context tuples for \emph{full}
        attention, and \emph{clustered} and \emph{i-clustered} attention used
        for approximation. \ref{fig:supp_context} shows the the context for the
        question with answer higlighted in red.  \ref{fig:supp_full_appx} shows the
        attention distribtution for \emph{full}, \ref{fig:supp_iclus_appx} and
        \ref{fig:supp_clus_appx} show the approximation using \emph{i-clustered} and
        \emph{clustered} respectively. Note that \emph{i-clustered} has
        attention patterns very similar to \emph{full} while \emph{clustered}
        shows qualitatively different attention patterns.  For each question
        token, we also present the tokens with highest attention above a
        threshold on the right axis. For more information refer to
        \S~\ref{subsec:supp_approximation}.
    }
    \label{fig:attn_qualitative}
\end{figure}

\checknbdrafts
\end{document}